\newtheorem{lemma}{Lemma}
\newtheorem{theorem}{Theorem}
\newtheorem{definition}{Definition}
\newtheorem{assumption}{Assumption}
\newtheorem{prop}{Proposition}
\theoremstyle{remark}
\newtheorem*{remark}{Remark}
\theoremstyle{definition}
\DeclareMathOperator*{\argmax}{argmax}
\DeclareMathOperator*{\argmin}{argmin}
\newcommand{\Ct}{C_{T,\delta}}
\begin{document}

\twocolumn[

\aistatstitle{Learning the Truth From Only One Side of the Story}

\aistatsauthor{ Heinrich Jiang$^*$ \And Qijia Jiang$^*$ \And Aldo Pacchiano$^*$ }

\aistatsaddress{ Google Research \\ \texttt{heinrichj@google.com} \And Stanford University \\ \texttt{qjiang2@stanford.edu} \And UC Berkeley \\ \texttt{pacchiano@berkeley.edu} } ]

\begin{abstract}
Learning under one-sided feedback (i.e., where we only observe the labels for examples we predicted positively on) is a fundamental problem in machine learning -- applications include lending and recommendation systems. Despite this, there has been surprisingly little progress made in ways to mitigate the effects of the sampling bias that arises. We focus on generalized linear models and show that without adjusting for this sampling bias, the model may converge suboptimally or even fail to converge to the optimal solution. We propose an adaptive approach that comes with theoretical guarantees and show that it outperforms several existing methods empirically. Our method leverages variance estimation techniques to efficiently learn under uncertainty, offering a more principled alternative compared to existing approaches.
\end{abstract}

\section{INTRODUCTION}
\label{sec::intro}

Machine learning is deployed in a wide range of critical scenarios where the feedback is one-sided, including bank lending \cite{tsai2010credit,kou2014mcdm,tiwari2018machine}, criminal recidivism prediction \cite{tollenaar2013method,wang2010predicting,berk2017impact}, credit card fraud \cite{chan1999distributed,srivastava2008credit}, spam detection \cite{jindal2007review,sculley2007practical}, self-driving motion planning \cite{paden2016survey,lee2014local}, and recommendation systems \cite{pazzani2007content,covington2016deep,he2014practical}. These applications can often times be modeled as one-sided feedback in that the true labels are only observed for the positively predicted examples and the learner is simultaneously making predictions and actively learning a better model. For example, in bank loans, the learner only observes whether the loan was repaid if it was approved. In criminal recidivism prediction, the decision maker only observes any re-offences for inmates who were released.

Incidentally, this problem can be viewed as a variation on the classical active learning problem in the streaming setting \cite{bordes2005fast,chu2011unbiased,lu2016online}, where unlabeled examples arrive in a sequential manner and the learner must decide whether to query for its label for a fixed cost in order to build a better model. Here, the goal is similar, with the difference that the labels being queried are the ones with positive predictions. There is a tension between making the correct predictions and choosing the right examples to query for labels -- a cost is associated with querying negative examples on one hand, and on the other we seek to learn a better model for improved future performance. As we show later, the key difficulty of this problem lies in understanding this trade-off and exactly pinpointing when to make a positive prediction in the face of uncertainty. In the case of bank lending, for example, assessing the confidence for the prediction on applicant's chance of repayment is of great importance. Decision needs to be made on balancing the risk of default if granted the loan, which comes with a high cost, and the benefit of the additionally gathered data our model can learn from.

One often overlooked aspect is that the samples used to train the model, which prescribes which data points we should act upon next, are inherently biased by its own past predictions. In practical applications, there is a common belief that the main issue caused by such one-sided sampling is label imbalance \cite{he2014practical}, as the number of positive examples will be expected to be much higher than overall for the population. Indeed, this biasing of the labels leading to label imbalance can be a challenge, motivating much of the vast literature on label imbalance. However, the challenges go beyond label imbalance. We show that without accounting for such potential myopia caused by biased sampling, it is possible that we under-sample in regions where the model makes false negative predictions, and even with continual feedback, the model never ends up correcting itself. In the bank loan example, such under-sampling may systematically put minority group in a disadvantaged position, as reflected by the error being disproportionally attributed across groups, if we content ourselves with a point estimator that doesn't take into consideration the error bar that's associated.



In this paper, we take a data-driven approach to guide intervention efforts on correcting for the bias -- uncertainty quantification tools are used for striking the balance between short-term desideratum (i.e., low error rate on current sample) and long-term welfare (i.e., information collection for designing optimal policy). More concretely, we focus on generalized linear models, borrowing assumptions from a popular framework of \citet{glm_bandit}. Our contributions can be summarized as follows.
\begin{itemize}
    \item In Section~\ref{sec:setup}, we propose an objective, {\it one-sided loss}, to capture the one-sided learner's goals for the model under consideration.
    \item In Section~\ref{sec:slower_convergence}, we show that without leveraging active learning where the model is continuously updated upon seeing new labeled examples, a model may need to be trained on a sub-optimal amount of data to achieve a desirable performance on the objective. 
    \item In Section~\ref{sec:greedy_lower}, we show that the greedy active approach (i.e., updating the model only on examples with positive predictions at each timestep) in general will not exhibit asymptotically vanishing loss.
    \item In Section~\ref{sec:adaptive}, we give a strategy that adaptively adjusts the model decision by incorporating the uncertainty of the prediction and show an improved rate of convergence on the objective. 
    \item In Section~\ref{sec:sgd}, we explore the option of using iterative methods for learning the optimal model parameters while maintaining small misclassification rate under this partial feedback setting. The proposed SGD variant of the adaptive method complements our main results which focus on models fully optimized on all of the labeled examples observed so far.
    \item In Section~\ref{sec:experiments}, we provide an extensive experimental analysis on linear and logistic regression on various benchmark datasets showing that our method outperforms a number of baselines widely used in practice.
\end{itemize}
To the best of our knowledge, we give the most detailed analysis in the ways in which passive or greedy learners are sub-optimal in the one-sided feedback setting and we present a practical algorithm that comes with rigorous theoretical guarantees which outperforms existing methods empirically. 


\section{RELATED WORK}
\label{sec:related_works}
\vspace{-0.1cm}
Despite the importance and ubiquity of this active learning problem with one-sided feedback, there has been surprisingly little work done in studying the effects of such biased sampling and how to mitigate it. Learning with partial feedback was first studied by \citet{helmbold2000apple} under the name ``apple tasting" who suggest to transform any learning procedure into an apple tasting one by randomly flipping some of the negative predictions into positive ones with probability decaying over time. They give upper and lower bounds on the number of mistakes made by the procedure in this setting. 
\citet{sculley2007practical} studies the one-sided feedback setting for the application of email spam filtering and show that the approach of \citet{helmbold2000apple} was less effective than a simple greedy strategy. 
\citet{cesa2006worst} propose an active learning method for linear models to query an example's label randomly with probability based on the model's prediction score for that example. \citet{bechavod2019equal} consider the problem of one-sided learning in the group-based fairness context with the goal of satisfying equal opportunity \cite{hardt2016equality} at every round. They consider convex combinations over a finite set of classifiers and arrive at a solution which is a randomized mixture of at most two of these classifiers. 

\citet{cesa2006regret} studies a setting which generalizes the one-sided feedback, called {\it partial monitoring}, through considering repeated two-player games in which the player receives a feedback generated by the combined choice of the player and the environment. They propose a randomized solution.  \citet{antos2013toward} provides a classification of such two-player games in terms of the regret rates attained and \citet{bartok2012partial} study a variant of the problem with side information. Our approach does not rely on randomization that is typically required to solve such two-player games. There has also been work studying the effects of distributional shift caused by biased sampling  \cite{perdomo2020performative}. \citet{ensign2017decision} studies the one-sided feedback setting through the  problems of predictive policing and recidivism prediction. They show a reduction to the partial monitoring setting and provide corresponding regret guarantees.


\citet{glm_bandit} propose a generalized linear model framework for the multi-armed bandit problem, where for arm $a$, the reward is of the form $\mu(a^{\top} \beta^*) + \epsilon$ where $\beta^*$ is unknown to the learner, $\epsilon$ is additive noise, and $\mu(\cdot)$ is a link function. Our work borrows ideas from this framework as well as proof techniques. 
Their notion of regret is based on the difference between the expected reward of the chosen arm and that of an optimal arm. One of our core contributions is showing that, surprisingly, modifications to the GLM-UCB algorithm leads to a procedure that minimizes a very different objective under a disparate feedback model. 

\section{PROBLEM SETUP}
\label{sec:setup}

We assume that data pairs $(x,y) \in \mathbb{R}^d\times \mathbb{R}$ are streaming in and the learner interacts with the data in sequential rounds: at time step $t$ we are presented with a batch of $N$ samples $(x_1^t, ,\cdots,x_N^t)$, and for the data points we decide to observe, we are further shown the corresponding labels $y_i^t$, while no feedback is provided for the unobserved ones. We make the following assumptions.
\begin{assumption}[GLM Model]
\label{assumption:setup} 
There exists $\beta^* \in \mathbb{R}^d$ (unknown to the learner) and link function $\mu : \mathbb{R} \mapsto \mathbb{R}$ (known to the learner) such that
$y$ is drawn according to an additive noise model $y=\mu(x^{\top}\beta^*)+\epsilon$. The link function $\mu(\cdot)$ is continuously differentiable and strictly monotonically increasing, with Lipschitz constant $L$, i.e., $ 0 < \mu'(z) \le  L \;\forall z \in \mathbb{R}$. Moreover, $\mu(0)\leq \gamma$. 
\end{assumption}
\begin{assumption}[Bounded Covariate] There exists some $B > 0$ such that $\|x_i^t\|_2\leq B$ for all $i \in [N], t\ge 0$. 
\end{assumption}
\begin{assumption}[Parameter Diameter]
The unknown parameter $\beta^*$ satisfies $\|\beta^*\|_2 \leq M$.
\end{assumption}
\begin{assumption}[Subgaussian Noise]
The noise residuals $\epsilon_i^t := y_i^t - \mu(x_i^{t\top}\beta^*)$ are  mutually independent, conditionally zero-mean and conditionally $\phi$-subgaussian. That is, $\forall i \in [N], t \ge 1, \tau\in\mathbb{R}$,
\[\mathbb{E}[\epsilon_i^t|\{x_i^t\}_i,\{\epsilon_i^{t-1}\}_i,\cdots,\{x_i^0\}_i,\{\epsilon_i^0\}_i]=0,\] \[\mathbb{E}[\exp(\tau\epsilon_i^t)|\{x_i^t\}_i,\{\epsilon_i^{t-1}\}_i,\cdots,\{x_i^0\}_i,\{\epsilon_i^0\}_i] \le \exp(\phi^2\tau^2)\,.\] 
\end{assumption}

\begin{remark}
Taking $\mu(z) = z$ gives a linear model and $\mu(z) = (1+e^{-z})^{-1}$ gives a logistic model. Also note that the assumptions imply there exists $\eta > 0$ such that $\mu'(x^{\top} \beta) \ge \eta$ for all $x, \beta \in\mathbb{R}^d$ satisfying $\|x\|_2 \le B$ and $\|\beta\|_2 \le M$ (see Lemma~\ref{lemma:eta} in Appendix~\ref{sec:appendix_proof} for a short proof).
\end{remark}

We are interested in learning a strategy that can identify all the feature vectors $x \in \mathbb{R}^d$ that have response $y$ above some pre-specified cutoff $c$, while making as few mistakes as possible along the sequential learning process compared to the Bayes-optimal oracle that knows $\beta^*$ (i.e., the classifier $x \mapsto \mathbbm{1}\{\mu(x^{\top}\beta^*) \ge c\}$). It is worth noting that we don't make any distributional assumption on the feature vectors $x \in \mathbb{R}^d$. Thus, our adaptive algorithm works in both the adversarial setting and the stochastic setting where the features are drawn i.i.d. from some unknown underlying distribution. 

Our goal is to minimize the objective formally defined in Definition~\ref{def:regret}, which penalizes exactly when the model performs an incorrect prediction compared to the Bayes-optimal decision rule, and the penalty is the distance of the expected response value for that example to the desired cutoff $c$. 
\begin{definition}[One-Sided Loss]\label{def:regret} For feature-action pairs $(x_i^t,a_i^t)_{i=1}^N \in \mathbb{R}^d\times \{0,1\}$, the one-sided loss incurred at time $t$ on a batch of size $N$ with cutoff at $c$ is the following: 
\begin{equation}
\label{eqn:instant_regret}
r_t := \sum_{i=1}^N |\mu(x_i^{t\top}\beta^*)-c|\cdot \mathbbm{1}\Big\{\mathbbm{1}\{\mu(x_i^{t\top}\beta^*)>c\}\neq a_i^t\Big\}.
\end{equation}
\end{definition}

We give an illustrative example of how this objective naturally arises in practice. Suppose that a company is looking to hire job applicants, where each applicant will contribute some variable amount of revenue to the company and the cost of hiring an applicant is a fixed cost of $c$. If the company makes the {\it correct} decision on each applicant, it will incur no loss, where correct means that it hired exactly the applicants whose expected revenue contribution to the company is at least $c$. The company incurs loss whenever it makes an incorrect decision: if it hires an applicant whose expected revenue is below $c$, it is penalized on the difference. Likewise, if it doesn't hire an applicant whose expected revenue is above $c$, it is also penalized for the expected profit that could have been made. Moreover, this definition of loss promotes a notion of {\it individual fairness} because it encourages the decision maker to not hire an unqualified applicant over a qualified one. While our setup captures scenarios beyond fairness applications, this aspect of individual fairness in one-sided learning may be of independent interest.

\newlength\myindent
\setlength\myindent{2em}
\newcommand\bindent{%
  \begingroup
  \setlength{\itemindent}{\myindent}
  \addtolength{\algorithmicindent}{\myindent}
}
\newcommand\eindent{\endgroup}

\section{PASSIVE LEARNER HAS SLOW RATE}
\label{sec:slower_convergence}

In this section, we show that under the stronger i.i.d data generation assumption, in order to achieve asymptotically vanishing loss, one could leverage an ``offline" algorithm that learns on an initial training set only, but at the cost of having a slower rate for the one-sided loss we are interested in. Our passive learner (Algorithm~\ref{alg:offline}) proceeds by predicting positively on the first $K+S$ samples to collect the labeled examples to fit on, where the first $K$ samples are used to obtain a finite set of models which represent all possible binary decision combinations on these $K$ samples that could have been made by the GLM model. The entire observed $K+S$ labeled examples are then used to pick the best model from this finite set to be used for the remaining rounds without further updating. 

More formally, we work with the setting where the feature-utility pairs $(x_t,u_t)\sim \mathcal{P}$ are generated i.i.d in each round. Let the class of strategies be $\Pi = \{\pi^{\beta}: \|\beta\|_2 \leq M\}$, where $\pi^\beta(x):=\mathbbm{1}\{\mu(x^{\top}\beta) \geq c\}$ is the threshold rule corresponding to parameter $\beta$. Moreover, let the utility for covariate $x_t$ with action $a_t\in \{0,1\}$ be 
\[u_t(x_t,a_t) :=|y_t-c|\cdot \mathbbm{1}\Big\{\mathbbm{1}\{y_t>c\}\neq a_t\Big\}\, .\]
The initial discretization of the strategy class is used for a covering argument, the size of which is bounded with VC dimension. Using Hoeffding's inequality and a union bound over $|\hat{\Pi}|$, one can easily obtain a high-probability deviation on the quantity \[\left|\frac{1}{K+S}\sum_{t=1}^{K+S} u_t(x_t,\hat{\pi}(x_t)) - \mathbb{E}_{\mathcal{P}(u,x)}[ u(x,\hat{\pi}(x))] \right|\]
uniformly over all $\hat{\pi}\in \hat{\Pi}$, after which the optimality of $\hat{\pi}_K$ is invoked for reaching the final conclusion. We show that with optimal choices of $K$ and $S$, Algorithm~\ref{alg:offline} has suboptimal guarantees -- needing as many as $\widetilde{\mathcal{O}}(1/\epsilon^3)$ rounds in order to attain an average one-sided loss of at most $\epsilon$, whereas our adaptive algorithm to be introduced later will only need $\widetilde{\mathcal{O}}(1/\epsilon^2)$ rounds. This suggests the importance of having the algorithm {\it actively} engaging throughout the data streaming process, beyond working with large collection of observational data only, for efficient learning.
\begin{algorithm}[t]
\caption{\textsc{Passive Learner}}
\label{alg:offline}
\begin{algorithmic}
   \State \textbf{Inputs:} Discretization sample size $K$, Exploration sample size $S$, cutoff $c$, Time horizon $T$ 
   \State \textbf{Initialization:} Choose to observe pairs of $(x_i,y_i) \in \mathbb{R}^{d}\times \mathbb{R}$ for $K+S$ rounds, set the action $a_i=1$. 
  \begin{enumerate}
   \item Construct discretized strategy class $\hat{\Pi}$ using the first $K$ samples, containing one representative $\hat{\beta}^k \in \mathbb{R}^d$ for each element of the set $\{(\pi(x_1),\cdots, \pi(x_K)): \pi\in\Pi\}$.
   \item Find the best strategy on the observed $K+S$ data pairs as:
   \[\hat{\pi}_K^{\hat{\beta}^*} = \arg\min_{\pi\in\hat{\Pi}}\, \sum
   _{t=1}^{K+S} u_t(x_t,\pi(x_t))\]
\end{enumerate}
  
\For{$t = K+S+1,\cdots, T$}
\State Output $a_t = \hat{\pi}_K^{\hat{\beta}^*}(x_t)= \mathbbm{1}\{\mu(x_t^{\top}\hat{\beta}^*) \geq c\}$ as decision on $x_t$, observe $y_t$ if $a_t = 1$
\EndFor
\State \textbf{Output: $\hat{\beta}^*, \{a_t\}_t$}
\end{algorithmic}
\end{algorithm}
We give the guarantee in the proposition below. The proof is in Appendix~\ref{sec:appendix_offline}. 
\begin{prop}[Bound for Algorithm~\ref{alg:offline}]
\label{prop:offline}
Under Assumption~1-4 and the additional assumption that the feature-utility pairs $(x_t,u_t)\sim \mathcal{P}$ are drawn i.i.d in each round, we have that picking $K=\mathcal{O}(T^{1/3})$, $S=\mathcal{O}(T^{2/3})$ in Algorithm~\ref{alg:offline}, for $\Ct= LBM+\gamma+c+\phi\sqrt{\log(2T/\delta)}$, with probability at least $1-2\delta$, 
\begin{align*}
\sum_{t=1}^T \mathbb{E}_{\mathcal{P}}[u(x,a_t)] \leq \min_{\pi \in \Pi} &\sum_{t=1}^T \mathbb{E}_{\mathcal{P}}[u(x,\pi(x))]\\
&+\mathcal{O}\Big(\Ct T^{2/3}d\log\Big(\frac{T}{d\delta}\Big)\Big). 
\end{align*}
 This in turn gives the following one-sided loss bound with the same probability: 
\[ \mathbb{E}\Big[\sum_{t=1}^Tr_t\Big] \leq \mathcal{O}\Big(\Ct T^{2/3}d\log\Big(\frac{T}{d\delta}\Big)\Big).\]
\end{prop}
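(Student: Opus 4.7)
The argument I would pursue is an explore-then-commit analysis that combines a Sauer--Shelah bound on the random discretized class $\hat{\Pi}$, a Hoeffding union bound for the ERM step, and crucially a \emph{realizable-case} (fast-rate) VC bound for the discretization error. The first $K+S$ rounds contribute at most $\Ct(K+S)$ trivially to the cumulative loss, which gets absorbed into the final bound.

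As preparation, I would first observe that Assumptions~1--3 imply $|\mu(x^\top\beta^*)|\le\gamma+LBM$, and that sub-Gaussian concentration plus a union bound over $t\le T$ give $|\epsilon_t|\le\phi\sqrt{\log(2T/\delta)}$ simultaneously for all $t$ with probability at least $1-\delta$. Hence $|y_t-c|\le\Ct$ and each $u_t(x_t,a_t)\in[0,\Ct]$. Because $\Pi$ consists of halfspace-threshold rules $\mathbbm{1}\{x^\top\beta\ge\mu^{-1}(c)\}$, its VC dimension is $\mathcal{O}(d)$, so Sauer--Shelah yields $|\hat{\Pi}|\le(eK/d)^d$ for any realization of $x_1,\ldots,x_K$, giving $\log|\hat{\Pi}|=\mathcal{O}(d\log(K/d))$.

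The heart of the proof is a per-round decomposition for $t>K+S$. Writing $R(\pi):=\mathbb{E}_{\mathcal{P}}[u(x,\pi(x))]$, letting $\pi^*\in\Pi$ be its minimizer (which coincides with $\pi^{\beta^*}$ since that is the Bayes-optimal rule for $u$), and letting $\hat{\pi}^*\in\hat{\Pi}$ be the representative that agrees with $\pi^*$ on $x_1,\ldots,x_K$, I split
\begin{align*}
R(\hat{\pi}_K^{\hat{\beta}^*})-R(\pi^*)
&\;=\;\bigl[R(\hat{\pi}_K^{\hat{\beta}^*})-\hat{R}(\hat{\pi}_K^{\hat{\beta}^*})\bigr] \\
&\quad +\bigl[\hat{R}(\hat{\pi}_K^{\hat{\beta}^*})-\hat{R}(\hat{\pi}^*)\bigr] \\
&\quad +\bigl[\hat{R}(\hat{\pi}^*)-R(\hat{\pi}^*)\bigr] \\
&\quad +\bigl[R(\hat{\pi}^*)-R(\pi^*)\bigr],
\end{align*}
where $\hat{R}$ denotes the empirical average used by the algorithm. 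Conditioning on $x_1,\ldots,x_K$ fixes $\hat{\Pi}$, after which Hoeffding plus a union bound over $\hat{\Pi}$ controls the first and third brackets by $\mathcal{O}(\Ct\sqrt{d\log K/(K+S)})$; the second bracket is nonpositive by optimality of $\hat{\pi}_K^{\hat{\beta}^*}$. Since the algorithm minimizes on all $K+S$ samples rather than on the last $S$ alone, a small correction of order $\Ct K/(K+S)$ may appear, but this is dominated by the other terms at the final parameter choice.

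The crux is the fourth bracket, which would naively be bounded by the two-sided uniform convergence rate $\mathcal{O}(\Ct\sqrt{d\log K/K})$ and would only yield an overall $T^{5/6}$ result. I avoid this by exploiting that $\pi^*\triangle\hat{\pi}^*$ has \emph{zero} empirical probability on the first $K$ i.i.d.\ samples and belongs to the symmetric-difference class $\Pi\triangle\Pi$ of VC dimension $\mathcal{O}(d)$. The realizable VC inequality then delivers $\mathcal{P}(\pi^*\triangle\hat{\pi}^*)\le\mathcal{O}(d\log(K)/K)$ with high probability, and since $|u(x,a)-u(x,a')|\le\Ct$, this upgrades the discretization error to the \emph{fast} rate $R(\hat{\pi}^*)-R(\pi^*)\le\mathcal{O}(\Ct d\log(K)/K)$. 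Plugging $K=T^{1/3},\ S=T^{2/3}$ into the per-round bound $\mathcal{O}\bigl(\Ct\sqrt{d\log K/(K+S)}+\Ct d\log(K)/K\bigr)$, multiplying by $T$, and adding the initial exploration cost $\Ct(K+S)$, every term collapses to $\mathcal{O}(\Ct T^{2/3}d\log(T/(d\delta)))$ after absorbing logarithmic factors. The bound on $\mathbb{E}[\sum_t r_t]$ follows immediately: conditioning on $x_t$ shows $\mathbb{E}[r_t\mid x_t,a_t]=\mathbb{E}[u(x_t,a_t)\mid x_t,a_t]-\mathbb{E}[u(x_t,\pi^*(x_t))\mid x_t]$, so summing gives $\mathbb{E}[\sum_t r_t]=\sum_t\mathbb{E}[u(x_t,a_t)]-TR(\pi^*)=\sum_t\mathbb{E}[u(x_t,a_t)]-\min_{\pi\in\Pi}\sum_t\mathbb{E}[u(x,\pi(x))]$, which the first inequality already bounds. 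The main technical obstacle is this fast-rate step: without it the optimal $K$ would become $T^{2/3}$ and the total bound would only reach $T^{5/6}$.
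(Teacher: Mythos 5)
Your proposal is correct and follows essentially the same route as the paper's proof: an explore-then-commit analysis combining a Sauer-lemma discretization of $\Pi$, Hoeffding plus a union bound over $\hat{\Pi}$ for the ERM step, the trivial $\Ct(K+S)$ cost for the exploration phase, and the identity $\sum_t\mathbb{E}[r_t]=\sum_t\mathbb{E}[u(x,a_t)]-\min_\pi\sum_t\mathbb{E}[u(x,\pi(x))]$ (via Bayes-optimality of $\pi^{\beta^*}$ and zero-mean noise) to convert the utility bound into the one-sided loss bound. The only substantive difference is that you make explicit the fast-rate $\mathcal{O}(\Ct d\log(K)/K)$ realizable-VC bound on the discretization error (and correctly flag it as the step without which the rate degrades to $T^{5/6}$), whereas the paper compresses this into the phrase ``a standard argument with Sauer's lemma'' and states the resulting $\frac{\Ct T}{K}d\log(T/d)$ term directly.
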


\section{GREEDY ACTIVE LEARNER MAY NOT CONVERGE}
\label{sec:greedy_lower}

In this section, we show that the greedy active learner, which updates the model after each round on the received labeled examples without regards for one-sided feedback, can fail to find the optimal decision rule, even under the i.i.d data assumption. More specifically, the greedy learner fits parameter $\hat{\beta}$ that minimizes the empirical loss $\sum_{(x_t, y_t)\colon a_t=1} \ell( x_t, y_t; \beta)$ on the datapoints whose labels it has observed so far at each time step. For example in the case $\mu(z) = z$ we use $\ell(x, y; \beta) = (x^\top \beta - y)^2 $, the squared loss; when $\mu(z) = \left( 1+ e^{-z}\right)^{-1}$ we instead use $\ell(x,y; \beta) = -y\log( \mu(  x^\top \beta)) - (1-y)\log(1-\mu(x^\top \beta))$, the cross-entropy loss. An alternative definition of the greedy learner can utilize the decision rule mandated by the $\hat{\beta}$ that minimizes the one-sided loss (Definition~\ref{def:regret}) on the datapoints predicted positive thus far. In our setup this is possible because whenever a datapoint label is revealed, the loss incurred by the decision can be estimated. As it turns out, these two methods share similar behavior, and we refer to reader to Appendix~\ref{sec:appendix_lower} for the discussion of this alternative method.

We illustrate in Theorem~\ref{theorem:lower_bound} below that even when allowing warm starting with full-rank randomly drawn i.i.d samples, there are settings where the greedy learner will fail to converge. More specifically, if the underlying data distribution produces with constant probability a vector $v$ with the rest of the mass concentrated on the orthogonal subspace, under Gaussian noise assumption, the prediction $\mu( v^{\top}\hat{\beta})$ has Gaussian distribution centered at the true prediction $\mu(v^{\top}\beta^{*})$. Using the Gaussian anti-concentration inequality from Lemma \ref{lemma::gaussian_lower_bound} provided in Appendix~\ref{sec:appendix_lower}, we can show that if $\mu(v^{\top}\beta^{*})$ is too close to the decision boundary $c$, there is a constant probability that the model will predict $\mu( v^{\top}\hat{\beta})<c$, and therefore the model may never gather more information in direction $v$ for updating its prediction as no more observation will be made on $v$'s label from this point on. This situation can arise for instance when dealing with a population consisting of two subgroups having small overlap between their features. 



 \begin{theorem}[Non-Convergence for Greedy Learner]
 \label{theorem:lower_bound}
Let $y = \mu(x^{\top} \beta^*) + \epsilon$ with $\epsilon \sim \mathcal{N}(0,1)$ and independent of $x$. Moreover, for $v \in \mathbb{R}^d$, let $P$ be a distribution such that $P(v) = 1/10$ and for all other vectors $v' \sim P$, it holds that $v'^{\top} v  = 0$. Consider an MLE fit using $\ell(x,y;\beta)$ with $n$ pairs of i.i.d. samples from $P$ for warm starting the greedy learner. Under the additional assumption that $x_1, \cdots, x_n$ span all of $\mathbb{R}^d$, if $ \mu(v^{\top} \beta^*)  = c + \tau$, with $\tau \leq 1/\sqrt{n'}$ (where $n'$ is the number of samples among $\{(x_i, y_i)\}_{i=1}^n$ with $x_i = v$), the loss after round $T$ is lower bounded as: 
\begin{equation*}
    \mathbb{E}\left[     \sum_{t=1}^T r_t \right] \geq  \Omega( (T-n) \tau)\, .
\end{equation*}
\end{theorem}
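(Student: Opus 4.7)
The plan is to exhibit a constant-probability ``bad'' event at the warm-start stage on which the greedy learner's projection onto $v$ is permanently trapped below the decision boundary, so that the learner mistakenly labels $v$ as negative forever and incurs loss $\tau$ at a constant fraction of the remaining rounds.

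First I would exploit the orthogonal structure of the support of $P$ to decouple the MLE. Writing any $\beta$ as $\beta = \hat{\alpha}\, v/\|v\|^2 + \hat{w}$ with $\hat{w} \perp v$, each warm-start observation has $x_i^{\top}\beta = \hat{\alpha}$ (if $x_i = v$) or $x_i^{\top}\beta = x_i^{\top}\hat{w}$ (if $x_i \perp v$). Hence the MLE loss $\sum_i \ell(x_i, y_i;\beta)$ splits as a function of $\hat{\alpha}$ only (via the $n'$ samples at $v$) plus a function of $\hat{w}$ only. Taking the first-order condition along $v$ gives, for both the linear and logistic variants of $\ell$,
\[
\mu(\hat{\alpha}_n) \;=\; \frac{1}{n'}\sum_{i:x_i=v} y_i \;=\; \bar{y}_v.
\]
Since the $\epsilon_i$ are i.i.d. standard Gaussians, $\bar{y}_v \sim \mathcal{N}(\mu(v^{\top}\beta^*),\, 1/n')$ exactly. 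Full-rank warm-start ensures the scalar MLE $\hat{\alpha}_n$ is well-defined (using strict monotonicity of $\mu$), and a separate argument handles $\hat{w}_n$ without affecting $v^{\top}\hat{\beta}_n = \hat{\alpha}_n$.

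Second, I would apply the Gaussian anti-concentration lemma (Lemma \ref{lemma::gaussian_lower_bound}) to the event $\mathcal{E} := \{\mu(\hat{\alpha}_n) < c\}$. Since $\mu(v^{\top}\beta^*) = c + \tau$, this event equals $\{Z < -\tau\sqrt{n'}\}$ for a standard normal $Z$, and the hypothesis $\tau \leq 1/\sqrt{n'}$ gives $\tau\sqrt{n'} \leq 1$, so $\Pr[\mathcal{E}] \geq \Phi(-1) =: p_0 > 0$.

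Third, I would show that $\mathcal{E}$ is absorbing. On $\mathcal{E}$ the greedy learner predicts $0$ on every $x_t = v$ and so never observes a new label at $v$; the labels it does observe come only from vectors orthogonal to $v$. Reapplying the decoupling argument to the running dataset (warm-start plus newly-observed orthogonal samples), the scalar equation determining $v^{\top}\hat{\beta}_t$ involves only the $n'$ original samples at $v$, so $v^{\top}\hat{\beta}_t = \hat{\alpha}_n$ for all $t \geq n$ on $\mathcal{E}$. Consequently the learner makes a false negative on every future round with $x_t = v$, an event of probability $1/10$ independently of the warm-start. Hence
\[
\mathbb{E}\Big[\sum_{t=1}^T r_t\Big] \;\geq\; \Pr[\mathcal{E}] \cdot (T-n) \cdot \tfrac{1}{10} \cdot \tau \;=\; \Omega((T-n)\tau).
\]

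The step I expect to be the main obstacle is the clean decoupling claim: for general link functions and squared or cross-entropy loss, it is not entirely obvious that the projection $v^{\top}\hat{\beta}_t$ is unchanged by updates involving only vectors orthogonal to $v$. The argument above handles this by projecting the stationarity conditions onto $v$, which works because the ``mixed'' contributions vanish (e.g., $v^{\top}v'_j = 0$) and $\mu' > 0$; stating this cleanly for both the squared-loss and logistic-loss variants — including uniqueness of $\hat{\alpha}_n$ via strict monotonicity of $\mu$ — is the only real subtlety beyond the Gaussian anti-concentration.
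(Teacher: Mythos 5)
Your proposal is correct and follows essentially the same route as the paper's proof: project the MLE stationarity condition onto $v$ using the orthogonality of the support, deduce that $\mu(v^{\top}\hat{\beta})\sim\mathcal{N}(\mu(v^{\top}\beta^*),1/n')$, invoke Gaussian anti-concentration to get a constant-probability false-negative event, and observe that this event is absorbing since no further labels at $v$ are ever collected. You are somewhat more explicit than the paper about why the event is absorbing and about the per-round factor $1/10$, and your direct bound $\Pr[Z<-\tau\sqrt{n'}]\geq\Phi(-1)$ is a clean (arguably cleaner) substitute for the paper's use of Lemma~\ref{lemma::gaussian_lower_bound}, but these are presentational differences, not a different argument.
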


\newcommand{\pluseq}{\mathrel{+}=}

\section{ADAPTIVE ALGORITHM}
\label{sec:adaptive}
 We propose Algorithm~\ref{alg:UCB} with the goal of minimizing the cumulative one-sided loss at time horizon $T$, $R_T := \sum_{t=1}^T r_t$, independent of the data distribution at each round. 
 \begin{algorithm}[t]
\caption{\textsc{Adaptive One-sided Batch Alg.}}
\label{alg:UCB}
\begin{algorithmic}
   \State \textbf{Inputs:} Batch size $N$, initialization sample size $K\geq d+1$ and eigenvalue $\lambda_0>0$, cutoff $c$ 
   \State \textbf{Inputs:} Lipschitz constant $L$, norm bounds $M, B, \phi, \eta, \gamma$, time horizon $T$, confidence level $\delta\in(0,1\wedge d/e)$
   \State \textbf{Initialization:} Choose to observe $K$ pairs of $\{(x_i^0,y_i^0)\}_{i=1}^K \in \mathbb{R}^{d}\times \mathbb{R}$, set $A \leftarrow \sum_{i=1}^K x_i^{0} x_i^{0\top}$
   \State Set $\kappa=\sqrt{3+2\log(1+2N B^2/\lambda_0)}$
\For{$t = 1,\cdots, T$}
\State Solve for $\hat{\beta}_t\in\mathbb{R}^d$ such that 
\begin{equation}
\label{eqn:mle_fit}    
\sum_{i=0}^{t-1}  X_i^{\top}(y_i-\mu(X_i\hat{\beta}_t)) = 0_d
\end{equation}
\If{$\|\hat{\beta_t}\|_2\leq M$}  $\beta_t \leftarrow \hat{\beta_t}$
\Else \, Perform projection step on $\hat{\beta}_t$ as
\[\beta_t =\argmin_{\|\beta\|_2 \leq M}\, \Big\|\sum_{i=1}^{t-1}X_i^{\top}\mu(X_i\beta)-\sum_{i=1}^{t-1}X_i^{\top}\mu(X_i\hat{\beta}_t)\Big\|_{A^{-1}}\]
\EndIf
\State Set $\rho_t(\delta) = \frac{2L}{\eta}\kappa \Ct \sqrt{2d\log t}\sqrt{\log(2dT/\delta)}$
\State Initialize $X_t,y_t= \emptyset$
\For{$j = 1,\cdots, N$}
\If{$\mu(x_j^{t\top} \beta_t)-c+\rho_t(\delta)\sqrt{x_j^{t\top} A^{-1}x_j^t} > 0$}
\State Choose to observe $y_j^t$ and 
set $a_j^t=1$ 
\State Update $X_t \leftarrow [X_t; x_j^t]\, , y_t = [y_t; y_j^t]$ 
\State Let $A\leftarrow A+x_j^tx_j^{t\top}$
\EndIf
\EndFor
\EndFor
\State \textbf{Output: $\beta_T, \{a_j^t\}$}
\end{algorithmic}
\end{algorithm}
The algorithm proceeds by first training a model on an initial labeled sample with the assumption that after initialization, the empirical covariance matrix $A$ is invertible with the smallest eigenvalue $\lambda_0>0$. At each time step, we solve for the MLE fit $\hat{\beta}_t$ on the examples observed so far, using e.g. Newton's method. If $\|\hat{\beta_t}\|_2$ is too large, we perform a projection step -- this step is only required as a analysis artifact to ensure that $\mu'(\cdot) > 0$ whenever it is evaluated in the algorithm. The model then produces point estimate $\mu(x^{\top} \beta_t)$ for each example $x$ in the current batch.
 
From here, we adopt an adaptive approach based on the point-wise uncertainty in the prediction, which for data point $x$ is proportional to $\sqrt{x^{\top} A^{-1} x}$ (where $A$ is the covariance matrix of the labeled examples the model is fit on thus far).
This choice is justified by showing that for any $X\in \mathbb{R}^{N\times (d+1)}$, whose rows consist of either $[0;x_i^t]$ or $[1;0_d]\, ,\forall i\in[N]$, we have with high probability
\[|1^{\top}\mu(X\tilde{\beta}^*) - 1^{\top}\mu(X\tilde{\beta}_t)|\leq \rho_t(\delta) \cdot\sum_{i=1}^N\sqrt{\bar{x}_i^{\top}A_{t-1}^{-1}\bar{x}_i}\]
for $\tilde{\beta}^* := [\mu^{-1}(c);\beta^*]$ the parameter of the optimal predictor and $\tilde{\beta}_{t} := [\mu^{-1}(c);\beta_{t}]$ our current best guess, where $\bar{x}_i$ is the last $d$ coordinates of the $i$-th row of the matrix $X$. With this on hand, a short calculation reveals that the loss incurred at all time step $t\leq T$, with probability at least $1-\delta$, is upper bounded as
\[r_t \leq 2\rho_t(\delta/2T) \cdot\sum_i\sqrt{x_{t,i}^{\top}A_{t-1}^{-1}x_{t,i}}\]
for $x_{t,i}$ the $i$-th row of $X_t$.
It only remains to upper bound $\sum_i\|x_{t,i}\|_{A_{t-1}^{-1}}$, for which matrix determinant lemma is invoked for volume computation of matrices under low-rank updates.

Intuitively, the algorithm chooses to observe the samples for which either we can't yet make a confident decision, by which collecting the sample would greatly reduce the uncertainty in that corresponding subspace (as manifested by reduction in $\sum_i\|x_{t,i}\|_{A_{t-1}^{-1}}$ for future rounds after updating the model at the end of the batch); or we are confident that the response of the sample is above $c$ (for which current decision would incur small loss).  
%
%
We give the following result whose proof is in Appendix~\ref{sec:appendix_proof}. 
\begin{theorem}[Guarantee for Algorithm~\ref{alg:UCB}]
\label{thm:ucb} Suppose that Assumption 1-4 hold.
Given a batch size $N$, we have that for all $T\geq 1$, 
\[R_T\leq\widetilde{\mathcal{O}} \Big(\Ct K+\frac{L}{\eta}\Ct \sqrt{Ts}dN\Big)  \]
with probability at least $1-2\delta$ for $ 0 < \delta < \min\{1,d/e\}$, where $s = \min(N,d)$, $\Ct = LBM+\gamma+c+\phi\sqrt{\log(2T/\delta)}$ and $\widetilde{\mathcal{O}}$ hides poly-logarithmic factors in $T,\delta^{-1},d,N,B, \lambda_0$.
\end{theorem}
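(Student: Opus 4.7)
The plan is to follow a UCB-style regret analysis adapted for generalized linear models in the one-sided feedback batched setting. First I would split the cumulative loss $R_T$ into two contributions: the initialization phase of $K$ rounds (where we may incur up to $\Ct$ per example simply by boundedness of $\mu(x^\top \beta^*)$ and the subgaussian noise tail, accounting for the $\widetilde{\mathcal{O}}(\Ct K)$ term) and the remaining adaptive rounds where the confidence-based decision rule is in effect.

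For the main term, the key tool is a self-normalized concentration inequality for the MLE residual in the augmented parameter space. Following the GLM-bandit argument of \citet{glm_bandit}, I would show that for the augmented parameters $\tilde{\beta}^* = [\mu^{-1}(c);\beta^*]$ and $\tilde{\beta}_t = [\mu^{-1}(c);\beta_t]$, the gradient-gap $\sum_{i\le t-1} X_i^\top(\mu(X_i\tilde{\beta}_t) - \mu(X_i\tilde{\beta}^*))$ concentrates in the $A_{t-1}^{-1}$-norm with rate controlled by $\Ct\sqrt{d\log t \cdot \log(dT/\delta)}$, after accounting for the projection step that keeps $\|\beta_t\|_2\le M$. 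A first-order Taylor expansion together with the lower bound $\mu'\ge\eta$ from the remark transfers this into the pointwise bound
\begin{equation*}
|1^\top \mu(X\tilde{\beta}^*) - 1^\top \mu(X\tilde{\beta}_t)| \le \rho_t(\delta)\sum_{i}\sqrt{\bar{x}_i^\top A_{t-1}^{-1}\bar{x}_i}
\end{equation*}
previewed in the text, with the $L/\eta$ factor arising from the change between response space and parameter space.

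With the confidence set in hand, I would bound the per-round loss. For each $x_j^t$ in batch $t$, the algorithm selects $a_j^t=1$ iff $\mu(x_j^{t\top}\beta_t) - c + \rho_t\sqrt{x_j^{t\top}A^{-1}x_j^t} > 0$. On the good event that $\beta^*$ lies in the confidence ellipsoid, a case split covers both types of mistakes: if the Bayes action is $1$ but the algorithm predicts $0$, then $\mu(x_j^{t\top}\beta^*)>c$ while the UCB is nonpositive, so $|\mu(x_j^{t\top}\beta^*)-c|\le 2\rho_t\sqrt{x_j^{t\top}A^{-1}x_j^t}$; if the Bayes action is $0$ but the algorithm predicts $1$, a symmetric argument against the upper bound yields the same inequality. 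A union bound over $t\le T$ therefore gives $r_t\le 2\rho_t(\delta/2T)\sum_{j}\sqrt{x_j^{t\top}A_{t-1}^{-1}x_j^t}$ on the pulled coordinates with probability at least $1-\delta$.

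The final step, and the main obstacle, is summing $\sum_{t=1}^T\sum_j\sqrt{x_j^{t\top}A_{t-1}^{-1}x_j^t}$ in the batched regime: inside a batch of size $N$ the covariance is stale, so the standard elliptical-potential lemma does not apply directly and we pay an extra factor from the rank-$N$ update. I would apply the matrix determinant lemma successively to the update $A_t=A_{t-1}+X_t^\top X_t$ to relate $\log\det(A_t)/\det(A_{t-1})$ to $\trace\big((A_{t-1}^{-1})^{1/2}X_t^\top X_t(A_{t-1}^{-1})^{1/2}\big)$, then use Cauchy-Schwarz together with $\mathrm{rank}(X_t^\top X_t)\le s=\min(N,d)$ and the constant $\kappa$ that absorbs the $\log(1+2NB^2/\lambda_0)$ slack between $A_{t-1}$ and the within-batch covariance. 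Telescoping and using $\log\det(A_T/A_0)=\widetilde{\mathcal{O}}(d\log T)$ then yields $\sum_{t,j}\sqrt{x_j^{t\top}A_{t-1}^{-1}x_j^t} = \widetilde{\mathcal{O}}(N\sqrt{Tsd})$. Multiplying by the confidence radius $\rho_t\sim (L/\eta)\Ct\sqrt{d}$ recovers the claimed $\widetilde{\mathcal{O}}((L/\eta)\Ct\sqrt{Ts}\,dN)$ term, and the $2\delta$ failure probability accommodates both the confidence event and the subgaussian tail bound on the $\epsilon_i^t$.
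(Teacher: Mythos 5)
Your proposal is correct and follows essentially the same route as the paper: split off the $\Ct K$ initialization cost, establish the self-normalized confidence bound $|\mu(x^{\top}\beta^*)-\mu(x^{\top}\beta_t)|\le \rho_t\|x\|_{A_{t-1}^{-1}}$ via the GLM-bandit concentration with the $L/\eta$ factor from $G_t\succeq \eta A_{t-1}$, deduce $r_t\le 2\rho_t\sum_{j:a_j^t=1}\|x_j^t\|_{A_{t-1}^{-1}}$ from optimism, and close with the determinant-lemma elliptical potential adapted to rank-$s$ batch updates plus Cauchy--Schwarz. The only (immaterial) difference is that you argue the per-round bound by a per-example false-positive/false-negative case split, whereas the paper phrases it as UCB optimality over the $2^N$ augmented action matrices in Equation~\eqref{eqn:opt}; both yield the same bound on the pulled coordinates.
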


  \begin{figure}[t]
   \begin{center}
     \includegraphics[width=0.29\textwidth]{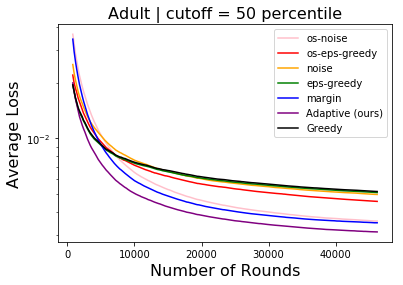}
     \includegraphics[width=0.29\textwidth]{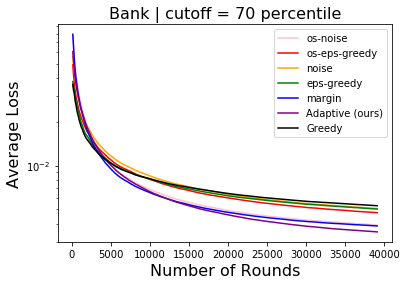}
     \end{center}
   \caption{Average one-sided loss $R_t/t$ for OLS. Each round consists of presenting a batch of $1$ example. All methods are under optimal tuning averaged across $10$ runs. 
   The rest of the charts are in Appendix~\ref{sec:appendix_simulation}.}
 	\label{fig:linear_plots}
 \end{figure}
 
   \begin{figure}[t]
   \begin{center}
     \includegraphics[width=0.3\textwidth]{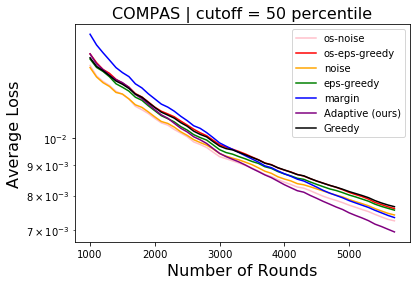}
     \includegraphics[width=0.3\textwidth]{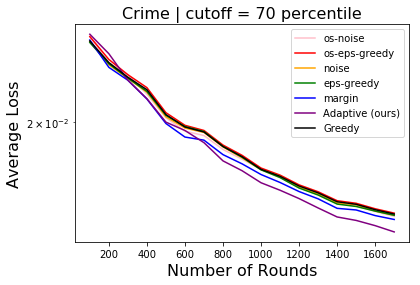}
     \end{center}
   \caption{Average one-sided loss $R_t/t$ for Logistic. Each round consists of presenting batch of $100$ samples. All methods are under optimal tuning averaged across $10$ runs. The rest of the charts are in Appendix~\ref{sec:appendix_simulation}. }
 	\label{fig:logistic_plots}
 \end{figure}

\begin{table*}[t]
\centering
\begin{tabular}{c|c|c|c|c|c|c|c|c}
\hline
  Dataset & cutoff & greedy & $\epsilon$-grdy & os-$\epsilon$-grdy   & noise           & os-noise        & margin & ours        \\ \hline\hline
   \multirow{2}{*}{Adult}& 50\% & 239.45 & 236.34 & 211.74 & 230.77 & 165.77 & 162.31 & {\bf 144.92} \\ \cline{2-9}
&70\% & 134.74 & 134.18 & 133.8 & 131.66 & 132.39 & 132.67 & {\bf 129.81}         \\ \hline
\multirow{2}{*}{Bank}& 50\% & 164.23 & 162.67 & 117.86 & 136.0 & 88.49 & 86.26 & {\bf 74.64 }        \\ 
\cline{2-9}
& 70\% & 
 207.6 & 197.0 & 185.9 & 198.66 & 153.3 & 150.75 & {\bf 137.24 } \\ \hline
\multirow{2}{*}{COMPAS}& 50\% & 41.56 & 36.67 & 36.93 & 36.93 & 28.09 & 28.12 & {\bf 26.01}       \\ 
\cline{2-9}
& 70\% & 41.66 & 39.16 & 39.61 & 39.87 & 38.03 & 36.98 & {\bf 34.07} \\ \hline
\multirow{2}{*}{Crime}& 50\%  & 15.77 & 15.77 & 15.5 & 15.66 & 14.93 & 14.73 & {\bf 13.95}   \\ 
\cline{2-9}
&  70\% & 22.0 & 21.75 & 21.99 & 20.33 & 20.63 & 20.1 & {\bf 19.19}  \\ \hline
\multirow{2}{*}{German}& 50\% & 14.7 & 14.51 & 14.12 & 13.62 & 11.12 & 10.52 & {\bf 9.63} \\ 
\cline{2-9}
& 70\% & 15.89 & 15.53 & 15.93 & 15.41 & 14.09 & 14.52 & {\bf 13.07}        \\ \hline
\multirow{2}{*}{Blood}& 50\% & 2.06 & 2.06 & 2.06 & 2.06 & 1.92 & 1.72 & {\bf 1.52}  \\ 
\cline{2-9}
& 70\% & 3.7 & 2.78 & 3.04 & {\bf 2.38} & 3.13 & 3.06 & 2.65       \\ \hline
\multirow{2}{*}{Diabetes}& 50\% & 4.17 & 4.16 & 4.23 & 3.94 & 3.81 & 3.95 & {\bf 3.61} \\ 
\cline{2-9}
& 70\% & 6.05 & 5.56 & 6.14 & 6.05 & 5.6 & 5.39 & {\bf 5.33}      \\ \hline
\multirow{2}{*}{EEG Eye }& 50\% &  256.47 & 200.04 & 175.8 & 173.52 & 106.26 &{\bf 96.85} & 119.7 \\ 
\cline{2-9}
& 70\% & 175.71 & 167.94 & 168.73 & 157.68 & 167.52 & 160.76 & {\bf 155.79}       \\ \hline
\multirow{2}{*}{Australian} & 50\% &  3.74 & 3.74 & 3.77 & 3.63 & 3.0 & 2.79 & {\bf 2.65} \\ 
\cline{2-9}
& 70\% & 6.77 & 6.77 & 6.77 & 6.66 & 5.09 & 5.26 & {\bf 4.65}       \\ \hline
\multirow{2}{*}{Churn} & 50\% &   46.98 & 43.65 & 30.65 & 36.64 & 21.24 & 18.83 & {\bf 14.89} \\ 
\cline{2-9}
& 70\% & 49.99 & 47.84 & 47.91 & 49.89 & 41.18 & 36.17 & {\bf 35.27}     \\ \hline
\end{tabular}
\vspace{0.2cm}
\caption{{Experimental results for cumulative one-sided loss for Linear Regression}.\label{tab:linear_hyper}}
\end{table*}

\begin{table*}[t]
\centering
\begin{tabular}{c|c|c|c|c|c|c|c|c}
\hline
   & cutoff & greedy & $\epsilon$-grdy & os-$\epsilon$-grdy   & noise           & os-noise    & margin    & ours        \\ \hline\hline
   \multirow{2}{*}{Adult} & 50\% & 43.48 & 43.55 & 43.48 & 43.35 & 43.41 & 43.38 & {\bf 42.63} \\ \cline{2-9}
   & 70\% & 102.86 & 102.86 & 102.9 & 102.6 & 102.81 & 102.47 & {\bf 100.06} \\ \hline
    \multirow{2}{*}{Bank} & 50\% & 23.22 & 23.26 & {\bf 23.18} & 23.3 & 23.33 & 23.2 & 23.23 \\ \cline{2-9}
   & 70\% & 85.72 & 85.94 & 85.67 & 85.51 & {\bf 85.26} & 85.27 & 85.75 \\ \hline
 \multirow{2}{*}{COMPAS}& 50\% &  44.47 & 43.88 & 44.15 & 43.07 & 42.11 & 42.64 & {\bf 40.34}      \\ 
\cline{2-9}
& 70\% & 43.7 & 43.59 & {\bf 43.41} & 43.66 & 43.83 & 43.7 & 43.7  \\ \hline
\multirow{2}{*}{Crime}& 50\%  & 11.04 & 10.83 & 11.04 & 10.85 & 10.33 & 10.44 & {\bf 9.42}    \\ 
\cline{2-9}
&  70\% & 26.05 & 25.93 & 26.13 & 25.94 & 25.84 & 25.55 & {\bf 24.46} \\ \hline
\multirow{2}{*}{German}& 50\% & 35.71 & 35.21 & 33.55 & 33.35 & 24.19 & 23.19 & {\bf 20.33}  \\ 
\cline{2-9}
& 70\% & 42.55 & 41.14 & 42.18 & 40.98 & 40.64 & 40.3 & {\bf 37.12}        \\ \hline
\multirow{2}{*}{Blood}& 50\% & 5.05 & 5.05 & 4.87 & 4.83 & 4.71 & 4.53 & {\bf 4.24} \\ 
\cline{2-9}
& 70\% & 13.04 & 13.04 & 13.03 & 13.04 & 10.84 & 12.14 & {\bf 9.69}    \\ \hline
\multirow{2}{*}{Diabetes}& 50\% & 28.23 & 28.23 & 27.75 & 27.22 & 26.67 & 26.18 & {\bf 25.16}  \\ 
\cline{2-9}
& 70\% &  29.36 & 28.0 & 27.79 & 28.0 & {\bf 27.4} & 27.9 & 28.11     \\ \hline
\multirow{2}{*}{EEG Eye }& 50\% &  239.33 & 238.92 & 239.09 & 236.65 & 200.61 & 201.51 & {\bf 187.28}  \\ 
\cline{2-9}
& 70\% & 209.48 & 207.89 & 208.83 & 206.63 & 204.94 & 205.4 & {\bf 199.04}     \\ \hline
\multirow{2}{*}{Australian} & 50\% &  21.88 & 21.88 & 21.87 & 21.21 & 21.76 & 20.81 & {\bf 20.38} \\ 
\cline{2-9}
& 70\% & 17.47 & 17.29 & 17.46 & {\bf 16.49} & 17.24 & 17.46 & 17.43    \\ \hline
\multirow{2}{*}{Churn} & 50\% & 61.04 & 57.74 & 54.13 & 53.85 & 39.46 & 38.88 & {\bf 34.89}\\ 
\cline{2-9}
& 70\% &  122.96 & 117.49 & 116.04 & 112.36 & 94.61 & 88.3 & {\bf 82.23}  \\ \hline
\end{tabular}
\vspace{0.2cm}
\caption{{Experimental results for cumulative one-sided loss for Logistic Regression}.\label{tab:logistic_hyper}  }
\end{table*}

\section{SGD UNDER ONE-SIDED FEEDBACK}
\label{sec:sgd}
In this section, we explore learning the parameter $\beta^*$ with iterative updates under one-sided feedback. 
We consider running projected SGD with the following gradient update on $(x_t,y_t)$ at time step $t$ for the GLM model:
\begin{align*}
\beta_{t+1} = \mathcal{P}_{\Omega} &\big(\beta_t-\eta
\cdot (-y_tx_t+\mu(x_t^{\top}\beta_t)x_t) \\
&\cdot \mathbbm{1}\{\mu(x_t^{\top}\beta_t)+s_t \geq c\} \big)\, 
\end{align*}
for some exploration bonus $s_t$ to be specified later, where the projection assures that $\mu'(\cdot)\geq\gamma$ for some $\gamma > 0$ throughout the execution of the algorithm. For example in logistic regression, we project onto the convex set $\Omega := \{\beta\colon |x_t^{\top}\beta| \leq r\}$ at each step $t$ to maintain this. In words, we perform prediction on $x_t$ with the current parameter $\beta_t$, and take a stochastic projected gradient step on the sample if $\mu(x_t^{\top}\beta_t)+s_t \geq c$. Since the noise $\epsilon_t$ is assumed to be zero-mean and independent of $\beta_t$ and $x_t$, this implies that in expectation (condition on $\beta_t$), we have
\begin{align*}
\beta_{t+1}-\beta^* &= \mathcal{P}_{\Omega}\big(\beta_t-\beta^*-\eta\cdot (-\mu(x_t^{\top}\beta^*)x_t\\
&\quad +\mu(x_t^{\top}\beta_t)x_t)\cdot \mathbbm{1}\{\mu(x_t^{\top}\beta_t)+s_t \geq c\}\big)\\
&= \mathcal{P}_{\Omega}\big(\beta_t-\beta^*-\eta\cdot\mu'(z)x_t^{\top}(\beta_t-\beta^*)x_t\\
&\quad \cdot \mathbbm{1}\{\mu(x_t^{\top}\beta_t)+s_t \geq c\}\big) 
\end{align*}
where we used mean value theorem for some $z\in [x_t^{\top}\beta^*, x_t^{\top}\beta_t]$. %
Taking norms on both sides and using the fact that convex projection is a contractive mapping, we have at step $t$, the expected progress as:
\begin{equation}
\begin{aligned}
\label{eqn:iterate_contract}
\|\beta_{t+1}-\beta^*\|_2^2 &\leq  \|\beta_t-\beta^*-\eta\cdot\mu'(z)x_t^{\top}(\beta_t-\beta^*)x_t\|_2^2 \\
&= \|\beta_t-\beta^*\|_2^2+\big[\eta^2\cdot \mu'(z)^2\|x_t\|_2^2\\
&\quad -2\eta\cdot\mu'(z)\big](x_t^{\top}(\beta_t-\beta^*))^2
\end{aligned}
\end{equation}
if $\mu(x_t^{\top}\beta_t)+s_t\geq c$; and contraction ratio of 1 (i.e., no update on $\beta$) if $\mu(x_t^{\top}\beta_t)+s_t < c$. This suggests that in the case where we choose to accept, either $|x_t^{\top}(\beta_t-\beta^*)|$ is small, in which case the probability of making a mistake on this sample is small already; or if large we make sufficient progress in this direction by performing the update. This is formalized in Algorithm~\ref{alg:sgd} and the corresponding Proposition~\ref{prop:sgd} below, whose proof we defer to Appendix~\ref{sec:iterative_method}. In order to have any hope of making progress towards $\beta^*$ (i.e., observing $y_t$ with non-trivial probability), however, we make the following assumption on the feature vectors.
\begin{assumption}[Subgaussian i.i.d Features]
\label{assume:feature}
The feature vectors $x_t$ at each time step $t$ are drawn i.i.d with independent $\sigma$-sub-gaussian coordinates. This in turn implies that since $\mu(x^{\top}\beta^*)$ is a univariate L-lipschitz function of $\|\beta^*\|_2\sigma$-subgaussian random variable, $\mu(x_t^{\top}\beta^*)- \mathbb{E}_x[\mu(x^{\top}\beta^*)]$ is itself $CL\|\beta^*\|\sigma$-subgaussian for some numerical constant $C$.
\end{assumption}

\begin{algorithm}[t]
\caption{\textsc{SGD Under Partial Feedback}}
\label{alg:sgd}
\begin{algorithmic}
\State \textbf{Inputs:} Initial $\beta_0$ and $d_0$ such that $\|\beta_0-\beta^*\| \leq d_0$
\State \textbf{Inputs:} Accuracy $\alpha$, Lipschitz const $L$, param $\delta$
\State \textbf{Inputs:} Bound $B$ such that $|\epsilon_t| \leq B\; \forall t$
\For{$t = 0,\cdots, T$}
\State Set $s_t = L\cdot (1+\delta)\cdot d_t\|x_t\|_2$
\If{$\mu(x_t^{\top}\beta_t)+s_t < c$}
\State Don't accept $x_t$, keep $\beta_{t+1}=\beta_t$ and $d_{t+1} = d_t$
\Else\, Accept $x_t$ and receive label $y_t$
\If{$\left|y_t-\mu(x_t^{\top}\beta_t)\right| \leq \alpha+B$}
\State Set $\beta_{t+1}=\beta_t$ and $d_{t+1}=d_t$
\Else\, Update as
$\beta_{t+1} = \mathcal{P}_{\Omega}(\beta_t-(L\|x_t\|_2^2)^{-1}\cdot (-y_tx_t+\mu(x_t^{\top}\beta_t)x_t))$; set $d_{t+1}^2 = d_t^2 - \alpha^2\|x_t\|_2^{-2} L^{-2}$
\EndIf
\EndIf
\EndFor
\State \textbf{Output: $\beta_T$}
\end{algorithmic}
\end{algorithm}

\begin{prop}
\label{prop:sgd}
Under Assumption 1 and 5, given $\rho\in(0,1)$, we have with probability at least $1-\rho$, for cutoff $c = \mathbb{E}_x[\mu(x^{\top}\theta^*)]-\zeta$ with $\zeta \geq \sqrt{2L\|\beta^*\|^2\sigma^2\log(\rho^{-1})}$, at iteration $t$ of Algorithm~\ref{alg:sgd}, either 
\[\mathbb{E}[\|\beta_{t+1}-\beta^*\|^2] \leq \mathbb{E}[\|\beta_t-\beta^*\|_2^2] - \frac{\alpha^2}{\|x_t\|_2^2 L^2}\, ,\]
or 
\[
\left|\mu(x_t^{\top}\beta^*)-\mu(x_t^{\top}\beta_t)\right| \leq L\gamma^{-1} (\alpha+2B)
\]
if picking $\delta^{-1} =\rho-e^{-\frac{\zeta^2}{2L^2\|\beta^*\|^2\sigma^2}}$.
Moreover, the probability of making a misclassification error at time step $t$ satisfies
\[\mathbb{P}\left(\mathbbm{1}\{\mu(x_t^{\top}\beta^*) \geq c\} \neq \mathbbm{1}\{\mu(x_t^{\top}\beta_t) +s_t\geq c\} \right) \leq \rho\, .\]
\end{prop}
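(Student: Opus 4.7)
My plan is to organise the proof around the single contraction identity (\ref{eqn:iterate_contract}) and a deterministic case split on the algorithm's decision, followed by a separate misclassification calculation.

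\textbf{Dichotomy.} Specialising (\ref{eqn:iterate_contract}) to the step size $\eta=1/(L\|x_t\|_2^2)$ used in Algorithm~\ref{alg:sgd}, the bracketed coefficient becomes $\mu'(z)(\mu'(z)-2L)/(L^2\|x_t\|_2^2)$, and since the projection onto $\Omega$ enforces $\mu'(z)\in[\gamma,L]$, it is upper-bounded by $-\mu'(z)/(L\|x_t\|_2^2)$. An update is performed only when we accept and $|y_t-\mu(x_t^\top\beta_t)|>\alpha+B$; combined with $|\epsilon_t|\le B$ the triangle inequality gives $|\mu(x_t^\top\beta^*)-\mu(x_t^\top\beta_t)|>\alpha$, and the mean value theorem with $\mu'(z)\le L$ produces $(x_t^\top(\beta_t-\beta^*))^2>\alpha^2/\mu'(z)^2$. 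Substituting yields an expected contraction of at least $\alpha^2/(\mu'(z)L\|x_t\|_2^2)\ge\alpha^2/(L^2\|x_t\|_2^2)$, i.e.\ the first alternative. When instead we accept but do not update, $|y_t-\mu(x_t^\top\beta_t)|\le\alpha+B$ together with $|\epsilon_t|\le B$ forces $|\mu(x_t^\top\beta^*)-\mu(x_t^\top\beta_t)|\le\alpha+2B\le L\gamma^{-1}(\alpha+2B)$, which is the second alternative; a rejected sample is handled by the misclassification analysis below, where it is shown to coincide with the correct decision.

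\textbf{Misclassification probability.} I would maintain the invariant $\mathbb{E}[\|\beta_t-\beta^*\|_2^2]\le d_t^2$ by induction: in no-update steps both sides are unchanged, and in an update step the first alternative above subtracts $\alpha^2/(L^2\|x_t\|_2^2)$ from the left-hand side, exactly matching the decrement of $d_t^2$ prescribed by Algorithm~\ref{alg:sgd}. The misclassification event splits into a false negative (where $\mu(x_t^\top\beta^*)\ge c$ but $\mu(x_t^\top\beta_t)+s_t<c$) and a false positive. The false-negative branch is ruled out using $|\mu(x_t^\top\beta^*)-\mu(x_t^\top\beta_t)|\le L\|x_t\|_2\|\beta_t-\beta^*\|_2\le Ld_t\|x_t\|_2<s_t$ since $s_t=L(1+\delta)d_t\|x_t\|_2$. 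The false-positive branch forces $\mu(x_t^\top\beta^*)<c=\mathbb{E}_x[\mu(x^\top\beta^*)]-\zeta$, which by the $O(L\|\beta^*\|_2\sigma)$-subgaussian concentration of Assumption~\ref{assume:feature} has probability at most $\exp(-\zeta^2/(2L^2\|\beta^*\|_2^2\sigma^2))$, and the stated calibration $\delta^{-1}=\rho-\exp(-\zeta^2/(2L^2\|\beta^*\|_2^2\sigma^2))$ bundles everything into a total of at most $\rho$.

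\textbf{Main obstacle.} The trickiest bookkeeping is promoting the in-expectation invariant $\mathbb{E}\|\beta_t-\beta^*\|_2^2\le d_t^2$ to the sample-path statement $\|\beta_t-\beta^*\|_2\le d_t$ needed to eliminate the false-negative branch. The cleanest route is either to interpret the invariant in expectation only (so the misclassification guarantee becomes a bound on an expected probability) or to pay a Markov factor; making the constants line up so that the final number is exactly $\rho$ (rather than some larger multiple of it) is the main place I expect care to be required.
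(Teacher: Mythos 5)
Your proposal is correct and follows essentially the same route as the paper's proof: the same case split on whether $|y_t-\mu(x_t^{\top}\beta_t)|$ exceeds $\alpha+B$ applied to the contraction identity, and the same decomposition of the misclassification event into a false positive (controlled by the subgaussian tail of $\mu(x_t^{\top}\beta^*)$) and a false negative (controlled by Markov's inequality on $\|\beta_t-\beta^*\|_2$, which is exactly why $s_t$ carries the $(1+\delta)$ slack and why $\delta^{-1}=\rho-e^{-\zeta^2/(2L^2\|\beta^*\|^2\sigma^2)}$ makes the total come out to $\rho$) — the "main obstacle" you flag is resolved in the paper precisely by the Markov-factor route you propose. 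Your bookkeeping of the mean value theorem factor $\mu'(z)$ in the contraction step (keeping $|x_t^{\top}(\beta_t-\beta^*)|>\alpha/\mu'(z)$ so that it cancels against the $-\mu'(z)/(L\|x_t\|_2^2)$ coefficient) is in fact slightly cleaner than the paper's, which lower-bounds by $\alpha/L$ and loses a factor of $\mu'(z)/L$ relative to the stated decrement.
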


\begin{remark}
If we are interested in cutoff $c = \mathbb{E}[\mu(x^{\top}\beta^*)]+\zeta$ for some $\zeta > 0$, a similar argument shows that picking $s_t = -L\cdot(1+\delta)\cdot d_t\|x_t\|_2$ will give the same misclassification error probability $\rho$, with the exception of course being that we won't be able to get the high probability contraction ratio for $\|\beta_t-\beta^*\|_2$ due to the lack of observations on $y_t$.
\end{remark}

\section{EXPERIMENTS}
\label{sec:experiments}
To further support our theoretical findings and demonstrate the effectiveness of our algorithm in practice, we test our method on the following datasets:\\  
1. {\bf Adult} \cite{lichman2013uci} ($48842$
examples). The task is to predict whether the person's income is more than $50$k.\\
2. {\bf Bank Marketing} \cite{lichman2013uci} ($45211$ examples). Predict if someone will subscribe to a bank product.\\
3. {\bf ProPublica’s COMPAS} \cite{propublica2018}  ($7918$ examples). Recidivism data.\\
4. {\bf Communities and Crime} \cite{lichman2013uci} ($1994$ examples). Predict if community is high (>70\%tile) crime.\\
5. {\bf German Credit} \cite{lichman2013uci} ($1000$ examples). Classify into good or bad credit risks.\\
6. {\bf Blood Transfusion Service Center} \cite{OpenML2013} ($784$ examples). Predict if person donated blood. \\
7. {\bf Diabetes} \cite{OpenML2013} ($768$ examples). Detect if patient shows signs of diabetes. \\ 
8. {\bf EEG Eye State} \cite{OpenML2013} ($14980$ examples). Detect if eyes are open or closed based on EEG data. \\
9. {\bf Australian Credit Approval} \cite{OpenML2013} ($690$ examples). Predict for credit card approvals.\\
10. {\bf Churn} \cite{OpenML2013} ($5000$ examples). Determine whether or not the customer churned. 

We compare against the following baselines:\\
1. {\bf  Greedy}, where we perform least-squares/logistic fit $\beta_t$ on the collected data and predict positive/observe label if $\mu(x^{\top}\beta_t)>c$.\\
2. {\bf $\epsilon$-Greedy} \cite{sutton2018reinforcement}, which with probability $\alpha/\sqrt{t}$, we make a random decision on the prediction (with equal probability), otherwise we use the greedy approach.\\
3. {\bf One-sided $\epsilon$-Greedy}, which with probability $\alpha/\sqrt{t}$ we predict positively, otherwise we use the greedy approach. This baseline is inspired from ideas in the original apple tasting paper \cite{helmbold2000apple}.\\
4. {\bf Noise}, which we add $\alpha u / \sqrt{t}$ to the prediction where $u$ is drawn uniformly on $[-\frac{1}{2}, \frac{1}{2}]$.\\
5. {\bf One-sided Noise}, which we add $\alpha u / \sqrt{t}$ to the prediction where $u$ is drawn uniformly on $[0, 1]$.\\
6. {\bf Margin}, which we add $\alpha / \sqrt{t}$ to the prediction. This can be seen as a non-adaptive version of our approach, since the quantity we add to the prediction for this baseline is uniform across all points.

For each dataset, we take all the examples and make a random stratified split so that $5\%$ of the data is used to train the initial model and the rest is used for online learning. For the linear regression experiments, we used a batch size of $1$ while for logistic regression we used a batch size of $1000$ for Adult, Bank, EEG Eye State and $100$ for the rest due to computational costs of retraining after each batch using \texttt{scikit-learn}'s implementation of logistic regression.  We compute the loss based on using an estimated $\beta^*$ obtained by fitting the respective model (either linear or logistic) on the entire dataset. Due to space limitation, we only show the results for cutoff $c$ chosen so that $50\%$ and $70\%$ of the data points are below the cutoff w.r.t. $\beta^*$ in Table~\ref{tab:linear_hyper} for linear regression and Table~\ref{tab:logistic_hyper} for logistic regression. Full results are in Appendix~\ref{sec:appendix_simulation}. For each dataset and setting of $c$, we averaged the performance of each method across $10$ different random splits of the dataset and tuned $\alpha$ over a grid of powers of $2$ (except greedy). 


%
\vspace{-0.1cm}
\section{DISCUSSION}
Many machine learning systems learn under active one-sided feedback, where experimental design is intertwined with the decision making process. In such scenarios, the data collection is informed by past decisions and can be inherently biased. In this work, we show that without accounting for such biased sampling, the model could enter a feedback loop that only reinforce its past misjudgements, resulting in a strategy that may not align with the long term learning goal. Indeed, we demonstrate that the de facto default approach (i.e., greedy or passive learning) often yields suboptimal performance when viewed through this lens. Thus, we propose a natural objective for the one-sided learner and give a practical algorithm that can be used to avoid such undesirable downstream effects. Both the theoretical grounding and the empirical effectiveness of the proposed algorithm offer evidence that it serves as a much better alternative in such settings. Future work involves building upon these ideas to incorporate more general model classes such as neural networks.



\newpage
\bibliographystyle{plainnat}
\bibliography{paper}

\clearpage
\newpage
\onecolumn
\appendix
\section{Proof for Algorithm~\ref{alg:offline}}
\label{sec:appendix_offline}
We start by stating a lemma below involving our assumption on $\mu(\cdot),\beta,x$ that implies the instantaneous one-sided loss is bounded by $\Ct:= LBM+\gamma+c+\phi\sqrt{\log(2T/\delta)}$ with probability $1-\delta/T$.
\begin{lemma} Given $\|x\|_2\leq B$, $\|\beta\|_2\leq M$, $\epsilon$ zero-mean subgaussian with parameter $\phi$ and $\mu(\cdot)$ has Lipschitz constant $L$ with $\mu(0)\leq \gamma$, we have the following tail bound on the empirical instantaneous one-sided loss
\[u(x,a) :=|\mu(x^{\top}\beta^*)+\epsilon-c|\cdot \mathbbm{1}\Big\{\mathbbm{1}\{\mu(x^{\top}\beta^*)+\epsilon>c\}\neq a\Big\}\, ,\]
as
\[\mathbb{P}(u(x,a) \geq LBM+\gamma+c+\tau) \leq 2\exp\Big(-\frac{\tau^2}{\phi^2}\Big)\]
for all $\tau>0$. 
\end{lemma}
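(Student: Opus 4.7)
The plan is to decouple the one-sided loss into a deterministic envelope plus the stochastic noise contribution, and then invoke a standard subgaussian tail bound. The first observation is that regardless of whether the indicator in $u(x,a)$ evaluates to $0$ or $1$, we always have the pointwise upper bound $u(x,a) \leq |\mu(x^{\top}\beta^*) + \epsilon - c|$, which by the triangle inequality is at most $|\mu(x^{\top}\beta^*)| + c + |\epsilon|$ (taking $c \geq 0$ as a cutoff on the response scale).

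Next I would bound $|\mu(x^{\top}\beta^*)|$ purely deterministically from the assumptions of the paper. The Lipschitz property of $\mu$ gives $|\mu(z) - \mu(0)| \leq L|z|$ for every $z$, so combined with $\mu(0) \leq \gamma$ (read, as is natural for standard link functions such as the identity or sigmoid, as an absolute bound $|\mu(0)| \leq \gamma$) we get $|\mu(x^{\top}\beta^*)| \leq \gamma + L|x^{\top}\beta^*|$. Cauchy--Schwarz with the norm bounds $\|x\|_2 \leq B$ and $\|\beta^*\|_2 \leq M$ then yields $|x^{\top}\beta^*| \leq BM$, and hence $|\mu(x^{\top}\beta^*)| \leq \gamma + LBM$. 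Chaining this into the triangle-inequality bound of the previous paragraph produces the deterministic envelope
\[
u(x,a) \;\leq\; LBM + \gamma + c + |\epsilon|.
\]

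Finally, to handle $|\epsilon|$, I would invoke the usual Chernoff--Markov argument on the subgaussian moment generating function hypothesis $\mathbb{E}[e^{\lambda\epsilon}] \leq e^{\phi^2\lambda^2}$: optimizing $\lambda$ in $\mathbb{P}(\epsilon > \tau) \leq e^{-\lambda \tau + \phi^2 \lambda^2}$ gives a one-sided tail, and a union bound on the sign yields a two-sided bound of the form $\mathbb{P}(|\epsilon| \geq \tau) \leq 2\exp(-\tau^2/(c_0\phi^2))$ with the constant $c_0$ dictated by the paper's scaling convention for the subgaussian parameter. Since the event $\{u(x,a) \geq LBM + \gamma + c + \tau\}$ is contained in $\{|\epsilon| \geq \tau\}$ by the deterministic envelope, the result follows immediately. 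The only real obstacle is bookkeeping: matching the constant inside the exponent in the subgaussian tail to the exact $2\exp(-\tau^2/\phi^2)$ form claimed in the statement, which is a straightforward reconciliation with the definition of $\phi$-subgaussian adopted in the paper.
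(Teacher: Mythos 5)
Your proposal is correct and follows essentially the same route as the paper's proof: bound the deterministic part $|\mu(x^{\top}\beta^*)-c|\leq LBM+\gamma+c$ via Lipschitzness, $\mu(0)\leq\gamma$, and Cauchy--Schwarz, then absorb the noise with the triangle inequality and a two-sided subgaussian tail bound on $|\epsilon|$. Your closing remark about reconciling the constant in the exponent with the paper's MGF convention is if anything more careful than the paper, which simply asserts the $2\exp(-\tau^2/\phi^2)$ tail.
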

\begin{proof}
By Cauchy-Schwartz and Lipschitz assumption, we have $|\mu(x^{\top}\beta^*)-c| \leq LBM+\gamma+c$. Now by triangle inequality since $|u|\leq LBM+\gamma+c+|\epsilon|$ and $\epsilon$ is subgaussian with parameter $\phi$, 
\[\mathbb{P}(|u|\geq LBM+\gamma+c+\tau)\leq  \mathbb{P}(|\epsilon|\geq \tau) \leq 2\exp\Big(-\frac{\tau^2}{\phi^2}\Big)\]
for all $\tau>0$. 
\end{proof}
All the analysis that follow will condition on this event, where we have that the one-sided loss is bounded by $\Ct$ for all $t\leq T$ with probability exceeding $1-\delta$. 

Before proceeding, we give a lemma below that characterizes the optimal solution to the expected one-sided loss minimization problem on the population level.
\begin{lemma} 
\label{lem: optimal_policy} 
The optimal strategy for the expected one-sided loss minimization problem satisfies
\[\min_{\pi \in \Pi}  \mathbb{E}_{\mathcal{P}}[u(x,\pi(x))]=\mathbb{E}_{\mathcal{P}}\Big[|y-c|\cdot \mathbbm{1}\Big\{\mathbbm{1}\{y>c\}\neq \mathbbm{1}\{\mu(x^{\top}\beta^*)>c\}\Big\}\Big]\]
for strategy class $\Pi = \{\pi^{\beta}: \|\beta\|_2 \leq M\}$, where $\pi^\beta(x):=\mathbbm{1}\{\mu(x^{\top}\beta) > c\}$ and expectation is taken over data that follows $y=\mu(x^{\top}\beta^*)+\epsilon$. In other words, $a = \mathbbm{1}\{\mu(x^{\top}\beta^*)>c\}$ is the optimal strategy for the objective at population level. 
\end{lemma}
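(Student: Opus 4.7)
The plan is to reduce the claim to a pointwise Bayes-optimality argument by conditioning on the covariate $x$ and minimizing over the action $a \in \{0,1\}$ directly. Since $\beta^*$ itself satisfies $\|\beta^*\|_2 \le M$, the policy $\pi^{\beta^*}(x) = \mathbbm{1}\{\mu(x^\top \beta^*) > c\}$ lies in $\Pi$, so it suffices to show that this policy achieves the conditional minimum at each $x$.

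First I would fix $x$ and split the conditional loss according to the chosen action. Writing $y = \mu(x^\top\beta^*) + \epsilon$, the conditional expected loss is
\[
\mathbb{E}[u(x,a)\mid x] = \mathbb{E}\bigl[(c-y)^+ \mid x\bigr]\cdot \mathbbm{1}\{a=1\} + \mathbb{E}\bigl[(y-c)^+ \mid x\bigr]\cdot \mathbbm{1}\{a=0\},
\]
since the penalty is incurred on a wrong-sign prediction and $|y-c|$ equals $(y-c)^+$ when $y>c$ and $(c-y)^+$ when $y\le c$. The key identity is the elementary relation $(c-y)^+ - (y-c)^+ = c - y$, which after taking conditional expectation and using the zero-mean noise assumption gives
\[
\mathbb{E}\bigl[(c-y)^+ - (y-c)^+ \mid x\bigr] = c - \mu(x^\top \beta^*).
\]
Hence the optimal pointwise action is $a = 1$ precisely when $\mu(x^\top\beta^*) \ge c$, and $a = 0$ otherwise; i.e., $a^*(x) = \mathbbm{1}\{\mu(x^\top\beta^*) > c\} = \pi^{\beta^*}(x)$ (the boundary case $\mu(x^\top\beta^*) = c$ incurs identical loss under either action, so the choice of strict vs.\ non-strict inequality is immaterial).

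Next I would lift this pointwise optimality to an inequality over all $\pi \in \Pi$: for any $\pi$,
\[
\mathbb{E}_{\mathcal{P}}[u(x,\pi(x))] = \mathbb{E}_{\mathcal{P}}\bigl[\mathbb{E}[u(x,\pi(x)) \mid x]\bigr] \ge \mathbb{E}_{\mathcal{P}}\bigl[\mathbb{E}[u(x,\pi^{\beta^*}(x)) \mid x]\bigr] = \mathbb{E}_{\mathcal{P}}[u(x,\pi^{\beta^*}(x))],
\]
where the middle inequality holds pointwise by the previous step. Finally, rewriting $\mathbb{E}_{\mathcal{P}}[u(x,\pi^{\beta^*}(x))]$ via the definition of $u$ reproduces exactly the right-hand side of the lemma.

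\textbf{Expected difficulty.} This is essentially the Bayes classifier argument adapted to a cost-weighted 0/1-type loss, so no step is substantive; the only subtlety is tracking the strict inequality in the definition of $\pi^\beta$ versus the weak inequality arising naturally from the sign of $c - \mu(x^\top\beta^*)$, which is harmless because the ambiguous event contributes zero loss under either action. Membership of $\pi^{\beta^*}$ in $\Pi$ is immediate from $\|\beta^*\|_2 \le M$ (Assumption 3), so no approximation argument is needed.
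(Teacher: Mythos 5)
Your proof is correct and rests on the same key mechanism as the paper's: the conditionally zero-mean noise assumption reduces the comparison between the two actions at a fixed $x$ to the sign of $c - \mu(x^\top\beta^*)$, so $\pi^{\beta^*}\in\Pi$ is pointwise (hence globally) optimal. The paper packages this slightly differently — rewriting the loss as $(y-c)\bigl(\mathbbm{1}\{y>c\}-\pi(x)\bigr)$ and dropping the policy-independent term rather than conditioning on $x$ and using $(c-y)^+ - (y-c)^+ = c-y$ — but the argument is essentially identical.
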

\begin{proof}
We can rewrite the objective in terms of $\beta$ as
\begin{align*}
    \min_{\pi \in \Pi}  \mathbb{E}_{\mathcal{P}}[u(x,&\pi(x))]
    =\min_{\beta:\|\beta\|_2\leq M} \mathbb{E}_{\mathcal{P}}\Big[(y-c)\cdot \Big(\mathbbm{1}\{y>c\}- \mathbbm{1}\{\mu(x^{\top}\beta)>c\}\Big)\Big]\\
    &= \min_{\beta:\|\beta\|_2\leq M} \mathbb{E}_{\mathcal{P}}\Big[(\mu(x^{\top}\beta^*)+\epsilon-c)\cdot \Big(\mathbbm{1}\{\mu(x^{\top}\beta^*)+\epsilon>c\}- \mathbbm{1}\{\mu(x^{\top}\beta)>c\}\Big)\Big]\, .
\end{align*}
Therefore it suffices to show that 
\[\beta^* = \argmax_{\beta:\|\beta\|_2\leq M} \mathbb{E}_{\mathcal{P}}\Big[(\mu(x^{\top}\beta^*)+\epsilon-c)\cdot  \mathbbm{1}\{\mu(x^{\top}\beta)>c\}\Big]\, .\]
As $\epsilon$ is zero-mean and independent of $x$ by assumption, we have 
\[\mathbb{E}_{\mathcal{P}}\Big[(\mu(x^{\top}\beta^*)+\epsilon-c)\cdot  \mathbbm{1}\{\mu(x^{\top}\beta)>c\}\Big] = \mathbb{E}_{\mathcal{P}}\Big[(\mu(x^{\top}\beta^*)-c)\cdot  \mathbbm{1}\{\mu(x^{\top}\beta)>c\}\Big]\, ,\]and the claim above immediately follows.
\end{proof}

With this in hand, we are ready to show the one-sided loss bound for the offline learner in Algorithm~\ref{alg:offline}.
\begin{proof}[Proof of Proposition~\ref{prop:offline}]
To construct an $\epsilon$-cover $\hat{\Pi}$ in the pseudo-metric $\rho(\pi,\hat{\pi}) = \mathbb{P}(\pi(x)\neq \hat{\pi}(x))$ for i.i.d feature-utility pairs $(x_t,u_t)\sim \mathcal{P}$, since the linear threshold functions in $\mathbb{R}^d$ has a VC dimension of $d+1$,
a standard argument with Sauer's lemma (see e.g. \citep{sauer}) concludes that for sequences $x_1,\cdots, x_T$ drawn i.i.d, with probability $1-\delta/2$ over a random subset of size $K$,
\begin{align*}
\min_{\pi \in \hat{\Pi}} \mathbb{E}_{\mathcal{P}(u,x)}\Big[\sum_{t=1}^T u_t(x_t,\pi(x_t))\Big] \leq \min_{\pi \in \Pi} &\mathbb{E}_{\mathcal{P}(u,x)}\Big[\sum_{t=1}^T u_t(x_t,\pi(x_t))\Big] + \frac{\Ct T}{K}\Big(2(d+1) \log\Big(\frac{eT}{(d+1)}\Big)+\log\Big(\frac{2}{\delta}\Big)\Big)
\end{align*}
for $|\hat{\Pi}| = (\frac{eT}{d+1})^{(d+1)}$. Now running the passive algorithm for the discretized strategy class $\hat{\Pi}$ on the exploration data we collected in the first phase (consists of $K+S$ rounds), we have for a fixed straegy $\hat{\pi}\in\hat{\Pi}$, since the $K+S$ terms are i.i.d and unbiased, 
\[\mathbb{E}_{\mathcal{P}(u,x)}\Big[\sum_{t=1}^{K+S}u_t(x_t,\hat{\pi}(x_t))\Big] = (K+S)\cdot\mathbb{E}_{\mathcal{P}(u,x)}[ u(x,\hat{\pi}(x))]\, .\]
Now via a Hoeffding's inequality for bounded random variables and a union bound over $|\hat{\Pi}|$, with probability at least $1-\delta$, simultaneously for all $\hat{\pi}\in\hat{\Pi}$, 
\[\Big|\frac{1}{K+S}\sum_{t=1}^{K+S} u_t(x_t,\hat{\pi}(x_t)) - \mathbb{E}_{\mathcal{P}(u,x)}[ u(x,\hat{\pi}(x))] \Big| \leq \sqrt{\frac{\Ct^2}{2(K+S)}\log\Big(\frac{2 |\hat{\Pi} |}{\delta}\Big)}\, .\]
Therefore applying the inequality twice with $\hat{\pi}_K := \hat{\pi}_K^{\hat{\beta}^*}$ and $\hat{\pi}^* := \min_{\pi \in \hat{\Pi}} \mathbb{E}_{\mathcal{P}}[ u(x,\pi(x))]$, and using the optimality of $\hat{\pi}_K$ as the empirical minimizer, we have for each round,
\[\mathbb{E}_{\mathcal{P}(u,x)}[ u(x,\hat{\pi}_K(x))] \leq \mathbb{E}_{\mathcal{P}(u,x)}[ u(x,\hat{\pi}^*(x))] + 2 \sqrt{\frac{\Ct^2}{2(K+S)}\left(\log\Big(\frac{4}{\delta}\Big)+(d+1)\log\Big(\frac{eT}{d+1}\Big)\right)}\]
with probability at least $1-\delta/2$. Now summing up over $T$ rounds, putting together the inequalities established and minimizing over $K$ and $S$, gives the final utility bound as
\begin{align*}
    \sum_{t=1}^T \mathbb{E}_{\mathcal{P}}[u(x,a_t)] &\leq \min_{\pi \in \Pi} \sum_{t=1}^T \mathbb{E}_{\mathcal{P}}[u(x,\pi(x))]+T\sqrt{\frac{2\Ct^2}{(K+S)}\Big(\log\Big(\frac{4}{\delta}\Big)+(d+1)\log\Big(\frac{eT}{d+1}\Big)\Big)}\\
    &+ \Ct(K+S)+\frac{\Ct T}{K}\Big(2(d+1) \log\Big(\frac{eT}{(d+1)}\Big)+\log\Big(\frac{2}{\delta}\Big)\Big)\\ &=\min_{\pi \in \Pi} \sum_{t=1}^T \mathbb{E}_{\mathcal{P}}[u(x,\pi(x))]+\mathcal{O}\Big(\Ct T^{2/3}d\log\Big(\frac{T}{d\delta}\Big)\Big)\, ,
\end{align*} 
with probability at least $1-\delta$. This in turn gives the one-sided loss bound
\begingroup
\allowdisplaybreaks
\begin{align*}
&\sum_{t=1}^T \mathbb{E}_{\mathcal{P}}[u(x,a_t)]-\min_{\pi \in \Pi} \sum_{t=1}^T \mathbb{E}_{\mathcal{P}}[u(x,\pi(x))]\\
&=\sum_{t=1}^T \mathbb{E}_{\mathcal{P}}\Big[|y-c|\cdot \mathbbm{1}\Big\{\mathbbm{1}\{y>c\}\neq a_t\Big\}-|y-c|\cdot \mathbbm{1}\Big\{\mathbbm{1}\{y>c\}\neq \mathbbm{1}\{\mu(x^{\top}\beta^*)>c\}\Big\}\Big]\\
&= \sum_{t=1}^T \mathbb{E}_{\mathcal{P}}\Big[(y-c)\cdot \Big(\mathbbm{1}\{\mu(x^{\top}\beta^*)>c\}-a_t\Big)\Big] \\
&= \sum_{t=1}^T \mathbb{E}_{\mathcal{P}}\Big[(\mu(x^{\top}\beta^*)-c)\cdot \Big(\mathbbm{1}\{\mu(x^{\top}\beta^*)>c\}-a_t\Big)\Big] + 
\sum_{t=1}^T \mathbb{E}_{\mathcal{P}}\Big[\epsilon \cdot \Big(\mathbbm{1}\{\mu(x^{\top}\beta^*)>c\}-a_t\Big)\Big]\\
&=  \sum_{t=1}^T \mathbb{E}_{\mathcal{P}}\Big[|\mu(x^{\top}\beta^*)-c|\cdot \mathbbm{1}\Big\{\mathbbm{1}\{\mu(x^{\top}\beta^*)>c\}\neq a_t\Big\}\Big] + \sum_{t=1}^T \mathbb{E}_{\mathcal{P}}\Big[\epsilon \cdot \Big(\mathbbm{1}\{\mu(x^{\top}\beta^*)>c\}-a_t\Big)\Big]\\
&= \sum_{t=1}^T \mathbb{E}_{\mathcal{P}}[r_t] = \mathcal{O}\Big(\Ct T^{2/3}d\log\Big(\frac{T}{d\delta}\Big)\Big)
\end{align*} 
\endgroup
with the same probability, where we used Lemma~\ref{lem: optimal_policy} in the first equality and the fact that $\epsilon$ is zero-mean and independent of $x$ for the last step.
\end{proof}

\section{Proof for Section~\ref{sec:greedy_lower}}
\label{sec:appendix_lower}
The following Gaussian anti-concentration bound is used throughout the proof.
\begin{lemma}\label{lemma::gaussian_lower_bound}
For $X\sim\mathcal{N}(\mu, \sigma^2)$, we have the lower bound on Gaussian density as:
 \[\mathbb{P}\left (X-\mu<-t \right) \ge \frac{1}{\sqrt{2\pi}}\frac{\sigma t}{t^2+\sigma^2} e^{-\frac{t^2}{2\sigma^2}}\, .\]
\end{lemma}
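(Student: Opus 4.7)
The plan is to reduce to the standard normal case and then verify the inequality by comparing derivatives. Standardizing via $Z := (X-\mu)/\sigma \sim \mathcal{N}(0,1)$ and using the symmetry of the normal density, the stated bound is equivalent to showing that for all $u \geq 0$,
\[
F(u) := \mathbb{P}(Z > u) \;\geq\; G(u) := \frac{u}{u^2+1}\,\phi(u),
\qquad \phi(u) := \frac{1}{\sqrt{2\pi}}\, e^{-u^2/2},
\]
since $\mathbb{P}(X-\mu < -t) = \mathbb{P}(Z > t/\sigma)$ and substituting $u = t/\sigma$ into $G(u)$ produces exactly $\frac{\sigma t}{t^2+\sigma^2}\cdot\frac{1}{\sqrt{2\pi}}e^{-t^2/(2\sigma^2)}$ after clearing denominators. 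For $u \leq 0$ the bound is trivial, so we may restrict to $u \geq 0$.

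To establish $F \geq G$ on $[0,\infty)$, I will compare the two functions at $u = \infty$, where both vanish, and track their difference as $u$ decreases. A direct computation gives $F'(u) = -\phi(u)$, while differentiating $G$ and using $\phi'(u) = -u\phi(u)$ yields
\[
G'(u) \;=\; \phi(u)\,\frac{1 - 2u^2 - u^4}{(u^2+1)^2}.
\]
Subtracting, the $u^4 + 2u^2$ terms cancel cleanly in the numerator and the identity $(u^2+1)^2 - (2u^2 + u^4 - 1) = 2$ reduces the difference to
\[
F'(u) - G'(u) \;=\; -\phi(u)\,\frac{(u^2+1)^2 + 1 - 2u^2 - u^4}{(u^2+1)^2} \;=\; -\,\frac{2\,\phi(u)}{(u^2+1)^2}.
\]

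Since both $F(u)$ and $G(u)$ tend to $0$ as $u \to \infty$, the fundamental theorem of calculus gives
\[
F(u) - G(u) \;=\; -\int_u^\infty \bigl(F'(s) - G'(s)\bigr)\, ds \;=\; \int_u^\infty \frac{2\,\phi(s)}{(s^2+1)^2}\, ds \;\geq\; 0,
\]
which is the desired inequality for the standard normal, and hence the lemma after the substitution $u = t/\sigma$. The routine step is the derivative computation for $G$; the only mildly clever bit is recognizing that the algebraic cancellation leaves the positive remainder $2/(u^2+1)^2$, which makes $F - G$ monotonically decreasing to $0$ and therefore nonnegative everywhere. No obstacle is anticipated beyond careful bookkeeping of the polynomial expansion.
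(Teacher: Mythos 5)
Your proof is correct. The standardization $Z=(X-\mu)/\sigma$ together with the substitution $u=t/\sigma$ does recover the stated bound exactly; the derivative computation
\[
G'(u)=\phi(u)\,\frac{1-2u^2-u^4}{(u^2+1)^2}
\]
is right; and the cancellation $(u^2+1)^2+1-2u^2-u^4=2$ indeed gives $F'(u)-G'(u)=-2\phi(u)/(u^2+1)^2$, so that integrating on $[u,\infty)$, where both $F$ and $G$ vanish at the upper end, yields $F-G\ge 0$. One point of comparison is worth noting: the paper states this lemma \emph{without any proof}, invoking it as a known Gaussian anti-concentration fact, so there is no internal argument to measure yours against. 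What you have written is essentially the classical derivation of the Mills-ratio lower bound $1-\Phi(u)\ge \frac{u}{u^2+1}\phi(u)$ (sometimes attributed to Feller): your monotonicity argument is equivalent to the textbook identity
\[
\frac{u}{u^2+1}\,\phi(u)\;=\;\int_u^\infty \Bigl(1-\tfrac{2}{(s^2+1)^2}\Bigr)\phi(s)\,ds\;\le\;\int_u^\infty \phi(s)\,ds,
\]
so your proposal supplies, in the standard way, exactly the justification the paper leaves implicit.
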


\subsection{Proof of Theorem~\ref{theorem:lower_bound}}

\begin{proof}[Proof of Theorem~\ref{theorem:lower_bound}]
 The optimality condition for MLE fit $\hat{\beta}$ gives that $\sum_{i=1}^{n}  y_i \cdot x_i  = \sum_{i=1}^{n}\mu(x_i^{\top}\hat{\beta})\cdot x_i$, which implies 
 \begin{align*}
    \sum_{i=1}^n x_i\cdot\epsilon_i = \sum_{i=1}^n x_i \cdot  \Big[\mu(x_i^{\top}\hat{\beta})-\mu(x_i^{\top}\beta^*)\Big]\, . 
     \end{align*}
In the direction $v$, where $v$ is orthogonal to every other vector drawn from $P$, we have for $n'$ the number of times $v$ has appeared in the $n$ samples used for warm-starting the greedy learner,
 \[\sum_{i=1}^{n'} v\cdot\epsilon_i = \sum_{i=1}^{n'}v\cdot \Big[\mu(v^{\top}\hat{\beta})-\mu(v^{\top}\beta^*)\Big]\]
 and therefore assuming $\epsilon_i\sim\mathcal{N}(0,1)$, we have $\mu(v^{\top}\hat{\beta})\sim \mathcal{N}(\mu(v^{\top}\beta^*),\frac{1}{n'})$. If $\mu(v^{\top}\beta^*)=c+\tau$ for $0 < \tau\le 1/\sqrt{n'}$, by anticoncentration property of the Gaussian distribution (Lemma~\ref{lemma::gaussian_lower_bound}):
 \begin{align*}
     \mathbb{P}\left( \mu(v^{\top}\hat{\beta}) < c  \right)
     &\geq \frac{1}{\sqrt{ 2\pi}} \frac{\tau/\sqrt{n'} }{1/n' + \tau^2 } \exp\left(-\frac{ \tau^2}{2/n'} \right)\geq  1/10 \, .
 \end{align*}
   Hence, with constant probability the greedy procedure will reject $v$ at the next round. From then on-wards, the greedy learner will reject all instances of $v$ and will incur a loss of $\tau$ every time it encounters it in any subsequent round.
 \end{proof}
 

\subsection{Linear One-Sided Loss for Empirical One-Sided Loss Minimization}
Let $(x)_+ = \max(0,x)$. For observed pairs of $(x_i,y_i)$ denoted by set $\mathcal{S}$, the empirical risk minimization of our one-sided loss can be rewritten as
\[\arg\min_{\beta}\; \sum_{i\in\mathcal{S}} |y_i-c|\cdot \frac{\big((x_i^{\top}\beta-c)(c-y_i)\big)_+}{(x_i^{\top}\beta-c)(c-y_i)},\]
from which it's obvious that it's not convex in $\beta$. However, in the case $x \in \{e_i\}_{i=1}^d$ (or any other orthogonal system), the coordinate decouples (after rotation) and the problem becomes solving $d$ problems in 1D as 
\[\arg\min_{\beta_i} \; \sum_{j \in \mathcal{S} : x_j = e_i} \frac{\big((\beta_i-c)(c-y_j)\big)_+}{|\beta_i-c|}\]
and the following procedure would find the optimal solution to the problem: (1) for all $y_i$ such that $y_i\leq c$, compute $l = \sum_{i:y_i\leq c} (c-y_i)$; (2) similarly compute $u = \sum_{i:y_i\geq c} (y_i-c)$; (3) compare the two quantities if $l < u$, any $\beta_i>c$ would be a global optimum for the problem with objective function value $l$, otherwise any $\beta_i<c$ would be a global optimum for the problem with objective function value $u$. Therefore for group $i$ ($x=e_i$) with response $y_i \sim \mathcal{N}(c+\tau_i,\sigma_i^2)$, where we initialize with $t$ observed samples, again using Lemma~\ref{lemma::gaussian_lower_bound},
\begin{align*}
    \mathbb{P}(u < l) &= \mathbb{P}\Big(\sum_{t:y_i^t\geq c} (y_i^t-c) <\sum_{t:y_i^t\leq c} (c-y_i^t)\Big) \\
    &= \mathbb{P}\Big(\sum_t (y_i^t-c) < 0\Big)\\
    &= \mathbb{P}\Big(\mathcal{N}(t\tau_i,t\sigma_i^2)<0\Big) \\
    &\geq \frac{1}{2\pi}\frac{\sqrt{t}\sigma_i\tau_i}{t\tau_i^2+\sigma_i^2}\exp\Big(-\frac{t\tau_i^2}{2\sigma_i^2}\Big) > 1/5
    \end{align*}
for $\tau_i=1/\sqrt{t}$ and $\sigma_i=1$, after which no observations will be made on group $i$ as $\hat{\beta}_i < c$ and linear one-sided loss will be incurred with constant probability.
\subsection{Simulation}
We use the example provided in Theorem~\ref{theorem:lower_bound} with $n=10000$ and $d=20$. The $x$ axis shows the number of rounds ($t$) (i.e., number of batches) and the $y$ axis shows the average one-sided loss $R_t/t$. Batch size is chosen to be $1$ for linear regression and $100$ for logistic regression. The average one-sided loss fails to decrease for the greedy method but our method (Algorithm~\ref{alg:UCB}) exhibits vanishing one-sided loss.
 \begin{figure}[H]
  \begin{center}
    \includegraphics[width=0.3\textwidth]{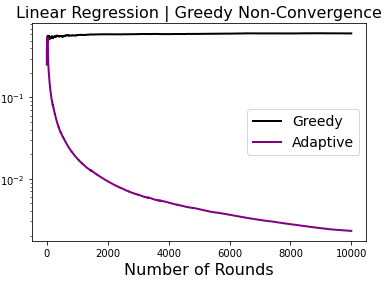}
    \includegraphics[width=0.32\textwidth]{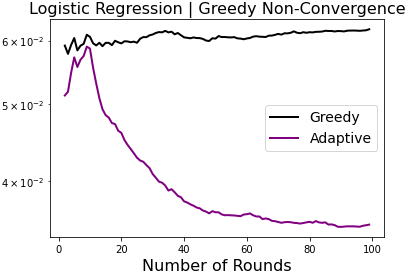}
    \end{center}
  \caption{{Examples when greedy fails to converge where th $y$-axis is the average one-sided loss up to the round indicated in the $x$-axis.} }
	\label{fig:lower_bound}
\end{figure}

\section{Proof for Algorithm~\ref{alg:UCB}}
\label{sec:appendix_proof}

\begin{lemma} 
\label{lemma:eta}
Suppose that Assumption~\ref{assumption:setup} holds. Let $B > 0$. Then there exists $\eta > 0$ such that $\mu'(x^{\top} \beta) \ge \eta$ for all $x, \beta \in\mathbb{R}^d$ satisfying $\|x\|_2 \le B$ and $\|\beta\|_2 \le M$.
\end{lemma}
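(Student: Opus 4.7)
The plan is to reduce the claim to a standard compactness argument. By Cauchy--Schwarz, for any $x,\beta\in\mathbb{R}^d$ satisfying $\|x\|_2\leq B$ and $\|\beta\|_2\leq M$, we have $|x^\top\beta|\leq BM$. Thus as $(x,\beta)$ ranges over this set, the scalar $z=x^\top\beta$ ranges inside the compact interval $[-BM,BM]$.

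Next I would invoke Assumption~\ref{assumption:setup}: the link function $\mu$ is continuously differentiable with $\mu'(z)>0$ for all $z\in\mathbb{R}$. In particular $\mu'$ is continuous on the compact interval $[-BM,BM]$, so by the extreme value theorem it attains its infimum there, and because the infimum is achieved at some point where $\mu'$ is strictly positive, we may set
\[\eta := \min_{z\in[-BM,BM]} \mu'(z) > 0.\]
Then for any admissible $x,\beta$, $\mu'(x^\top\beta)\geq \eta$, which is the desired conclusion.

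There is no real obstacle here; the lemma is essentially a restatement of the fact that a continuous positive function on a compact set is bounded away from zero. The only subtlety worth mentioning is that one should not confuse ``strictly positive'' with ``bounded below,'' which is precisely what compactness buys us; without the parameter and covariate norm bounds from Assumptions 2 and 3 (together with the Lipschitz/monotonicity structure in Assumption 1), the infimum of $\mu'$ could be zero in general (e.g.\ for the logistic link as $|z|\to\infty$).
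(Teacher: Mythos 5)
Your proof is correct and follows essentially the same route as the paper's: Cauchy--Schwarz confines $x^\top\beta$ to the compact interval $[-BM,BM]$, and then continuity and strict positivity of $\mu'$ on that compact set yield a positive minimum (the paper phrases this via compactness of the image of a continuous function, you via the extreme value theorem, but these are the same argument). No issues.
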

\begin{proof}
By Assumption 1, we have $\mu'$ is continuous and positive everywhere. Define the interval $\mathcal{I} := [-B\cdot M, B\cdot M]$. We have by Cauchy-Schwarz that $x^{\top}\beta \in \mathcal{I}$. Since $\mathcal{I}$ is closed and bounded, by Heine–Borel $\mathcal{I}$ is compact in $\mathbb{R}$. Since the image of a continuous function on a compact set is also compact, it follows that there exists $\eta > 0$ such that $\mu'(x) \ge \eta$ for all $x \in \mathcal{I}$, as desired.
\end{proof}

Much of the analysis in the lemma below is built upon \citep{glm_bandit}, generalized to our setting.

\begin{lemma}[Instantaneous One-Sided Loss]
\label{lem:rt}
For all $1 \leq t\leq T$ and some $0< \delta < \min(1,d/e)$, with $A_0 :=  \sum_{i=1}^K x_i^{0} x_i^{0\top}\succeq \lambda_0\cdot I_d$ and $A_t := A_0+\sum_{i=1}^t X_{i}^{\top}X_{i}$, we have under the assumption stated in Theorem~\ref{thm:ucb} that
\[r_t \leq 4\frac{L}{\eta}\kappa \Ct \sqrt{2d\log t}\sqrt{\log(2dT/\delta)}\cdot  \sum_{i=1}^N\sqrt{\bar{x}_i^{t\top}A_{t-1}^{-1}\bar{x}_i^t}\]
with probability at least $1-\delta$ for $\kappa=\sqrt{3+2\log(1+2N B^2/\lambda_0)}$, where $\bar{x}_i^t$ is either $x_i^t$ or $0_d$ depending on whether we choose to observe the context.
\end{lemma}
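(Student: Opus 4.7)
The plan is to first reduce the loss bound to a per-example prediction confidence bound, then invoke the UCB decision rule from Algorithm~\ref{alg:UCB}. Concretely, I would establish that, simultaneously for all $t\le T$ and all $x\in\mathbb{R}^d$ with $\|x\|_2\le B$, with probability at least $1-\delta$,
\[
|\mu(x^\top\beta^*)-\mu(x^\top\beta_t)| \le \rho_t(\delta)\sqrt{x^\top A_{t-1}^{-1}x}.
\]
Call this the per-example confidence bound. Given it, the lemma follows from a short case analysis on each $i\in[N]$: if $a_i^t=0$, the UCB rule combined with the confidence bound forces $\mu(x_i^{t\top}\beta^*)\le c$, so no false negative occurs and the $i$-th summand of $r_t$ vanishes; if instead $a_i^t=1$ but $\mu(x_i^{t\top}\beta^*)\le c$, adding the UCB inequality $c-\mu(x_i^{t\top}\beta_t)<\rho_t(\delta)\sqrt{x_i^{t\top}A_{t-1}^{-1}x_i^t}$ to the confidence bound yields $c-\mu(x_i^{t\top}\beta^*)<2\rho_t(\delta)\sqrt{x_i^{t\top}A_{t-1}^{-1}x_i^t}$. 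Summing and using $\bar x_i^t = 0$ whenever $a_i^t=0$ produces exactly the stated bound, since $4(L/\eta)\kappa\Ct\sqrt{2d\log t}\sqrt{\log(2dT/\delta)}=2\rho_t(\delta)$.

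To establish the per-example confidence bound, I would follow the GLM-UCB approach of \citet{glm_bandit}, adapted to the batched setting. The MLE first-order condition \eqref{eqn:mle_fit} together with $y_s=\mu(X_s\beta^*)+\epsilon_s$ gives $\sum_{s<t}X_s^\top[\mu(X_s\hat\beta_t)-\mu(X_s\beta^*)]=Z_t$, where $Z_t:=\sum_{s<t}X_s^\top\epsilon_s$ is a martingale with respect to the natural filtration. The projection step in Algorithm~\ref{alg:UCB} transfers this identity to $\beta_t$ up to a constant factor while ensuring $\|\beta_t\|_2\le M$. A coordinate-wise mean value theorem rewrites the left-hand side as $H_t(\beta^*-\beta_t)$ with $H_t:=\sum_{s,i}\mu'(z_{s,i})\,x_{s,i}\,x_{s,i}^\top\succeq \eta\,A_{t-1}$, where the last inequality applies Lemma~\ref{lemma:eta} to the observed $x_{s,i}$ (which satisfy $\|x_{s,i}\|_2\le B$) and to $\beta_t,\beta^*$ (both of norm $\le M$). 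Cauchy--Schwarz in the $H_t^{-1}$ norm together with Lipschitzness of $\mu$ then reduces the per-example confidence bound to a self-normalized tail estimate of the form
\[
\|Z_t\|_{A_{t-1}^{-1}}\;\le\;2\kappa\,\Ct\,\sqrt{2d\log t}\,\sqrt{\log(2dT/\delta)}
\qquad\text{uniformly in }t\le T.
\]

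The self-normalized tail estimate is the main technical obstacle, because each round adds a rank-up-to-$N$ update to $A_t$ rather than the rank-$1$ update familiar from the original GLM-UCB analysis. The plan is to control the log-determinant potential $\log\det(A_t A_0^{-1})$ by telescoping $\log\det(I+X_s A_{s-1}^{-1}X_s^\top)$ across time, bounding each increment using $\|x_{s,i}\|_2\le B$ and $A_{s-1}\succeq \lambda_0 I$; the constant $\kappa=\sqrt{3+2\log(1+2NB^2/\lambda_0)}$ arises naturally from this batched determinant accounting. A conditional sub-Gaussian MGF argument on each coordinate of $Z_t$, followed by a union bound over the $d$ coordinates and the $T$ time steps, then delivers the claimed bound; the $\Ct$ factor (rather than a bare $\phi$) appears because the unbounded sub-Gaussian noise is handled by conditioning throughout on the high-probability event $|\epsilon_i^s|\le \phi\sqrt{\log(2T/\delta)}$, at which point the observed labels are uniformly bounded by $\Ct$. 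Plugging the tail estimate into the reduction above yields the per-example confidence bound, and the case analysis from the first paragraph then completes the proof.
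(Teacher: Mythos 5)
Your proposal is correct, and the core machinery is identical to the paper's: the MLE first-order condition, the mean-value-theorem matrix $G_t\succeq \eta A_{t-1}$, Cauchy--Schwarz in the $G_t^{-1}$ norm, the triangle-inequality step absorbing the projection, and the self-normalized martingale bound of \citet{glm_bandit} (their Lemma 1, which the paper simply cites rather than re-derives as you sketch). Where you genuinely diverge is in how the confidence bound is converted into a bound on $r_t$. The paper recasts each round as a single composite action among $2^N$ choices by lifting to $(d+1)$ dimensions with rows $[0;x_i^t]$ or $[1;0_d]$ and $\tilde\beta^*=[\mu^{-1}(c);\beta^*]$, defines $X_t$ as the maximizer of the summed UCB objective, and runs the standard optimism telescoping $r_t=1^{\top}(\mu(X_t^*\tilde\beta^*)-\mu(X_t\tilde\beta^*))\le 2\zeta_t^{\bar X_t}$; your per-example case analysis (reject $\Rightarrow$ no false negative under the confidence event; accept with $\mu(x_i^{t\top}\beta^*)\le c$ $\Rightarrow$ loss at most $2\rho_t\sqrt{x_i^{t\top}A_{t-1}^{-1}x_i^t}$, and $\bar x_i^t=x_i^t$ precisely on accepted points) reaches the same constant by elementary means. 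Your route is more transparent about \emph{why} the bound holds — it makes explicit that only false positives contribute and each is charged its own uncertainty width — and it avoids the sign-vector $v$ gymnastics needed to handle $\|\bar X(\beta^*-\beta_t)\|_1$ in the summed version; the paper's composite-action framing buys a cleaner connection to the GLM-UCB template and would survive a decision rule that is not separable across the batch. One small caveat: your per-example confidence bound holds uniformly over all $\|x\|_2\le B$ only because it is deterministic conditional on the event $\|g_t(\beta^*)-g_t(\beta_t)\|_{A_{t-1}^{-1}}\le \kappa\Ct\sqrt{2d\log t}\sqrt{\log(2dT/\delta)}$, which is independent of $x$ — worth stating explicitly so the union bound is only over $t\le T$, as you intend.
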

\begin{proof}
We recast the problem as picking 1 out of $2^N$ choices (induced by all possible binary decisions on each of the $N$ samples in the batch) in each round with linear reward function. To this end, for each feature vector $x \in \mathbb{R}^d$, we encode the algorithm's two choices as $(d+1)$-dimensional vectors $[0;x]$ (selecting it) and $[1;0_d]$ (not selecting it). Let us denote $c' = \mu^{-1}(c)$, where $c$ is the cutoff. Then for each $i\in[N]$ at round $t$, OPT chooses $x_i^t$ to predict positively and observe $y_i^t$ if $\mu([0;x_i^t]\cdot \tilde{\beta}^*) = \mu(0\cdot c'+x_i^{t\top}\beta^*)$ exceeds
$\mu([1;0_d]\cdot \tilde{\beta}^*) = \mu(1\cdot c'+0_d^{\top}\beta^*)$ where $ \tilde{\beta}^* := [c';\beta^*]$.
We can then define the following notation $X_t$ representing Algorithm~\ref{alg:UCB}'s choices at round $t$:
\begin{equation}
\label{eqn:opt}
X_t := \argmax_{X\in\mathbb{R}^{N\times (d+1)}\colon x_i\in\{[0;x_i^t],[1;0_d]\}\,\forall i\in[N]} \; 1^{\top}\mu(X\tilde{\beta}_{t})+\rho_{t}(\delta/2T)\cdot \sum_{i=1}^N\sqrt{x_i^{[2:d+1]\top}A_{t-1}^{-1}x_i^{[2:d+1]}}
\end{equation}
where $\tilde{\beta}_{t} := [c';\beta_{t}]\in\mathbb{R}^{d+1}$ with $\beta_{t}$ as the MLE fit on $(x_i,y_i)$ pairs observed so far. Note that this is the same as the $X_t$ one would get from Algorithm~\ref{alg:UCB} up to padding of $0$ at front for the observed contexts and appending vector $[1,0_d]$ for the unobserved ones.

For any feasible context matrix $X\in \mathbb{R}^{N\times (d+1)}$ at round $t$, using the Lipschitz assumption on $\mu(\cdot)$, and denote $\tilde{\beta}^* = [c';\beta^*]\in\mathbb{R}^{d+1}$, we have
\begin{align*}
|1^{\top}\mu(X\tilde{\beta}^*) - 1^{\top}\mu(X\tilde{\beta}_t)|  
&\leq L\|X(\tilde{\beta}^*-\tilde{\beta}_t)\|_1 \\
&=L\|\bar{X}\beta^* - \bar{X}\beta_t\|_1
\end{align*}
where we used that both $\beta$'s have constant $c'$ in the first coordinate, so the problem is reduced to looking at the last $d$ coordinates, defined as $\bar{X}^*$ and $\bar{X}_t$ respectively. Now let $g_t(\beta):= \sum_{i=1}^{t-1}\bar{X}_i^{\top}\mu(\bar{X}_i\beta)$, Mean Value Theorem gives that
\begin{align*}
g_t(\beta^*)-g_t(\beta_t)&=\int_0^1 \nabla g_t(s\beta^*+(1-s)\beta_t) \, ds\cdot (\beta^*-\beta_t)\\
&=\int_0^1 \sum_{i=1}^{t-1}\bar{X}_i^{\top}\mu'\Big(\bar{X}_i(s\beta^*+(1-s)\beta_t)\Big)\bar{X}_i\,ds \cdot (\beta^*-\beta_t)\\
&=: G_t\cdot (\beta^*-\beta_t)
\end{align*}
where $G_t$ satisfies $G_t\succeq \eta A_{t-1} \succ 0$ since the middle term $\mu'(\cdot)$ can be seen as a diagonal matrix with entries $\geq\eta$ by Lemma~\ref{lemma:eta}. Therefore we have for some $v\in\{\pm 1\}^N$ by (1) Cauchy-Schwarz (as $G_t^{-1}\succ 0$); (2) triangle inequality; (3) $\beta_t$ is optimal for the projection problem,
\begingroup
\allowdisplaybreaks
\begin{align*}
|1^{\top}\mu(X\tilde{\beta}^*) - 1^{\top}\mu(X\tilde{\beta}_t)| &\leq L\|X(\tilde{\beta}^* - \tilde{\beta}_t)\|_1\\
&= L\|\bar{X}G_t^{-1}(g_t(\beta^*)-g_t(\beta_t))\|_1\\
&\leq L\|g_t(\beta^*)-g_t(\beta_{t})\|_{G_{t}^{-1}}\|\bar{X}^{\top}v\|_{G_t^{-1}} \\
&\leq \frac{L}{\eta}\|g_t(\beta^*)-g_t(\beta_{t})\|_{A_{t-1}^{-1}}\|\bar{X}^{\top}v\|_{A_{t-1}^{-1}} \\
&\leq \frac{L}{\eta}\|g_t(\beta^*)-g_t(\beta_{t})\|_{A_{t-1}^{-1}}\cdot  \sum_{i=1}^N\sqrt{\bar{x}_i^{\top}A_{t-1}^{-1}\bar{x}_i}\\
&=  \frac{L}{\eta}\Big\| \sum_{i=1}^{t-1}\bar{X}_i^{\top}\mu(\bar{X}_i\beta^*)- \sum_{i=1}^{t-1}\bar{X}_i^{\top}\mu(\bar{X}_i\beta_t)\Big\|_{A_{t-1}^{-1}}\cdot  \sum_{i=1}^N\sqrt{\bar{x}_i^{\top}A_{t-1}^{-1}\bar{x}_i}\\
&\leq \frac{L}{\eta}\Big(\Big\| \sum_{i=1}^{t-1}\bar{X}_i^{\top}\mu(\bar{X}_i\beta^*)- \sum_{i=1}^{t-1}\bar{X}_i^{\top}\mu(\bar{X}_i\hat{\beta}_t)\Big\|_{A_{t-1}^{-1}}\\
&\quad \quad +\Big\| \sum_{i=1}^{t-1}\bar{X}_i^{\top}\mu(\bar{X}_i\beta_t)- \sum_{i=1}^{t-1}\bar{X}_i^{\top}\mu(\bar{X}_i\hat{\beta}_t)\Big\|_{A_{t-1}^{-1}}\Big)\cdot  \sum_{i=1}^N\sqrt{\bar{x}_i^{\top}A_{t-1}^{-1}\bar{x}_i}\\
&= \frac{L}{\eta}\Big(\Big\| \sum_{i=1}^{t-1}\bar{X}_i^{\top}\mu(\bar{X}_i\beta^*)- \sum_{i=1}^{t-1}\bar{X}_i^{\top}y_i\Big\|_{A_{t-1}^{-1}}\\
&\quad \quad +\Big\| \sum_{i=1}^{t-1}\bar{X}_i^{\top}\mu(\bar{X}_i\beta_t)- \sum_{i=1}^{t-1}\bar{X}_i^{\top}\mu(\bar{X}_i\hat{\beta}_t)\Big\|_{A_{t-1}^{-1}}\Big)\cdot  \sum_{i=1}^N\sqrt{\bar{x}_i^{\top}A_{t-1}^{-1}\bar{x}_i}\\
&\leq \frac{2L}{\eta}\Big\| \sum_{i=1}^{t-1}\bar{X}_i^{\top}(\mu(\bar{X}_i\beta^*)-y_i)\Big\|_{A_{t-1}^{-1}}\cdot  \sum_{i=1}^N\sqrt{\bar{x}_i^{\top}A_{t-1}^{-1}\bar{x}_i}\\
&\leq \frac{2L}{\eta}\kappa \Ct \sqrt{2d\log t}\sqrt{\log(d/\delta)}\cdot  \sum_{i=1}^N\sqrt{\bar{x}_i^{\top}A_{t-1}^{-1}\bar{x}_i}\\
&=: \zeta_t^{\bar{X}}(\delta) =: \rho_t(\delta) \cdot\sum_{i=1}^N\sqrt{\bar{x}_i^{\top}A_{t-1}^{-1}\bar{x}_i}
\end{align*}
\endgroup
where we used Lemma 1 from~\citep{glm_bandit} for bounding the first term in the last step, which holds with probability at least $1-\delta$ for $0< \delta < \min(1,d/e)$ and $\kappa:=\sqrt{3+2\log(1+2N B^2/\lambda_0)}$. We used the fact that (1) the $N$ context vectors $\{\bar{x}_i\}$ are independent of each other in each round; (2) each of the error term $\mu(\bar{X}_i\beta^*)-y_i$ has sub-gaussian tail by assumption.

Leveraging this, for the instantaneous one-sided loss, we get with probability at least $1-\delta/T$ (denote $X^*_t\in\mathbb{R}^{N\times (d+1)}$ as the context chosen by the best action at time $t$)
\begin{align*}
r_t &= 1^{\top}(\mu(X^*_t\tilde{\beta}^*) - \mu(X_t\tilde{\beta}^*)) \\
&= 1^{\top}(\mu(X_t^*\tilde{\beta}^*)-\mu(X_t^*\tilde{\beta}_t))+1^{\top}(\mu(X_t^*\tilde{\beta}_t) - \mu(X_t\tilde{\beta}_t)) + 1^{\top}(\mu(X_t\tilde{\beta}_t) - \mu(X_t\tilde{\beta}^*))\\
&\leq \zeta_t^{\bar{X}^*_t}(\delta/2T)+\zeta_t^{\bar{X}_t}(\delta/2T)+1^{\top}(\mu(X_t^*\tilde{\beta}_t) - \mu(X_t\tilde{\beta}_t)) \\
&= \zeta_t^{\bar{X}^*_t}(\delta/2T)+\zeta_t^{\bar{X}_t}(\delta/2T)+1^{\top}\mu(X_t^*\tilde{\beta}_t)+\zeta_t^{\bar{X}^*_t}(\delta/2T) -1^{\top} \mu(X_t\tilde{\beta}_t)-\zeta_t^{\bar{X}_t^*}(\delta/2T)\\
&\leq \zeta_t^{\bar{X}^*_t}(\delta/2T)+\zeta_t^{\bar{X}_t}(\delta/2T)+1^{\top}\mu(X_t\tilde{\beta}_t)+\zeta_t^{\bar{X}_t}(\delta/2T) -1^{\top}\mu(X_t\tilde{\beta}_t)-\zeta_t^{\bar{X}_t^*}(\delta/2T)\\
&=2\zeta_t^{\bar{X}_t}(\delta/2T)
\end{align*}
where we used the optimality of $X_t$ for \eqref{eqn:opt} in the last inequality. Union bounding over $T$ time steps yields the claim.
\end{proof}

Below we state a helper lemma for bounding the second term $\sum_{i=1}^N\|\bar{x}_i^{t}\|_{A_{t-1}^{-1}}$ from the previous lemma.
\begin{lemma}[Helper Lemma]
\label{lem:det}
For all $T\geq 1$, let $A_0 =  \sum_{i=1}^K x_i^{0} x_i^{0\top}\succeq \lambda_0\cdot I_d$ and $A_t = A_0+\sum_{i=1}^t X_i^{\top}X_i$, for $s =\min(N ,d)$, under the assumption $\|x_i^t\|_2\leq B\; \forall i, t$,
\[\sum_{t=1}^T  \min\Big\{\sum_{i=1}^N\|\bar{x}_i^t\|_{A_{t-1}^{-1}}^2, N\Big\} \leq 2dNs\log\Big(\frac{\lambda_0+B^2NT}{d}\Big)-2dNs\log(\lambda_0)\, ,\]
where $\bar{x}_i^t$ is either $x_i^t$ or $0_d$ depending on whether we choose to observe the context.
\end{lemma}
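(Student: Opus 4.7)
\begin{newProof}[Proof plan]
The plan is to carry out a batched version of the elliptical-potential argument, with the matrix determinant lemma providing the key telescoping identity for $\log\det(A_t)$. First I would rewrite $\sum_{i=1}^N \|\bar{x}_i^t\|_{A_{t-1}^{-1}}^2 = \trace(M_t)$, where $M_t := X_t A_{t-1}^{-1} X_t^\top$ and $X_t \in \mathbb{R}^{n_t\times d}$ collects only the observed rows at round $t$ (with $n_t \leq N$); the zero rows $\bar{x}_i^t = 0_d$ drop out automatically. Note that $M_t$ is PSD of rank at most $s = \min(N,d)$, with eigenvalues bounded by $NB^2/\lambda_0$ using $\|X_t\|_{\mathrm{op}}^2 \leq \|X_t\|_F^2 \leq NB^2$ together with $A_{t-1} \succeq \lambda_0 I$. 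The matrix determinant lemma then gives $\det(A_t) = \det(A_{t-1})\det(I + M_t)$, so that $\sum_{t=1}^T \log\det(I + M_t) = \log\det(A_T) - \log\det(A_0)$ telescopes.

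The next step is to reduce $\min(\trace(M_t), N)$ to $\log\det(I + M_t)$ via two elementary scalar inequalities: (i) $\min(y, N) \leq \frac{N}{\log(1+N)}\log(1+y)$ for $y \geq 0$, which holds because $x/\log(1+x)$ is non-decreasing on $[0,\infty)$; and (ii) $\log(1 + \trace(M)) \leq \log\det(I + M)$ for PSD $M$, which follows from expanding $\prod_i(1+\lambda_i) \geq 1 + \sum_i \lambda_i$ for nonnegative eigenvalues. Composing the two gives $\min(\trace(M_t), N) \leq \frac{N}{\log(1+N)}\log\det(I + M_t)$, which telescopes to $\frac{N}{\log(1+N)} \log\det(A_T/A_0)$.

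To close the argument, I would apply AM-GM in the form $\log\det(A_T) \leq d\log(\trace(A_T)/d)$ with $\trace(A_T) \leq \trace(A_0) + B^2 NT$, and use $\log\det(A_0) \geq d\log\lambda_0$ coming from $A_0 \succeq \lambda_0 I$. The main obstacle is the constant bookkeeping needed to reach the claimed prefactor $2dNs$: the natural bound from the steps above is actually tighter, with prefactor $Nd/\log(1+N)$, so the stated form appears to be a loosened version; the explicit factor of $s$ likely comes from an alternative route where one splits each $\log\det(I + M_t)$ into its at most $s$ nonzero eigenvalue contributions and applies the classical scalar elliptical-potential lemma eigenvalue-by-eigenvalue. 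Either variant relies on the matrix determinant lemma telescoping as the engine for the $\log T$ scaling.
\end{newProof}
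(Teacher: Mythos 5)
Your proof is correct, and it rests on the same engine as the paper's argument --- the matrix determinant lemma giving $\det(A_t)=\det(A_{t-1})\det(I+M_t)$, the telescoping of $\log\det(A_t)$, and the AM--GM bound $\log\det(A_T)\le d\log(\trace(A_T)/d)$ --- but your middle reduction is genuinely different and in fact tighter. The paper bounds $\sum_i\|\bar{x}_i^t\|_{A_{t-1}^{-1}}^2$ by $N\max_i\lambda_i^t\le N\sum_i\lambda_i^t$ (diagonal entries of a PSD matrix are at most its largest eigenvalue) and then treats the eigenvalues one at a time via $\min\{1/s,\lambda\}\le 2\log(1+\lambda)$; this is exactly the ``eigenvalue-by-eigenvalue'' route you conjectured in your last sentence, and it is where the factors $N$ and $s$ enter to produce the prefactor $2dNs$. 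You instead use the exact identity $\sum_i\|\bar{x}_i^t\|_{A_{t-1}^{-1}}^2=\trace(M_t)=\sum_i\lambda_i^t$ together with the scalar inequalities $\min(y,N)\le \frac{N}{\log(1+N)}\log(1+y)$ and $1+\trace(M)\le\det(I+M)$, which yields the prefactor $Nd/\log(1+N)\le 2dN\le 2dNs$, so the stated bound follows with room to spare. The one caveat, which your write-up shares with the paper's, concerns the final trace estimate: correctly one has $\trace(A_T)\le\trace(A_0)+B^2NT$ with $\trace(A_0)\ge d\lambda_0$ in general, whereas the lemma's display puts $\lambda_0+B^2NT$ inside the logarithm; this stems from the paper's own assertion that $\trace(A_t)=\lambda_0+\sum_j\trace(X_j^\top X_j)$, affects only logarithmic factors, and is not a gap introduced by your argument.
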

\begin{proof}
We have from the Matrix Determinant Lemma that
\begin{align*}
\det(A_t)=\det(A_{t-1}+X_t^{\top}X_t)&=\det(A_{t-1})\det(I_N+X_t(A_{t-1})^{-1}X_t^{\top})\\
&=\det(A_{t-1})\prod_{i=1}^s (1+\lambda_i^t)\\
&=\det(A_0)\prod_{j=1}^t \prod_{i=1}^s (1+\lambda_i^j),
\end{align*}
where we denoted the nonzero eigenvalues of the PSD matrix $X_t(A_{t-1})^{-1}X_t^{\top}$ as $\{\lambda_i^t\}_{i=1}^s$. Since $\{\lambda_i^t\}_{i=1}^s$ are eigenvalues of $(A_{t-1})^{-1}$ restricted to span$\{x^t_i\}_{i=1}^N$, we have 
\begin{equation}
\label{eqn:eig}
\sum_{i=1}^N x^{t\top}_i (A_{t-1})^{-1} x^{t}_i \leq N\cdot\max_i(\lambda_i^t)\leq N\sum_{i=1}^s \lambda_i^t  \,.
\end{equation}

 Now using that $x\leq2\log(1+x)$ for $x\in[0,1]$, 
\[\sum_{j=1}^t \sum_{i=1}^s \min\{1/s,\lambda_i^t\} \leq 2\sum_{j=1}^t \sum_{i=1}^s \log(1+\lambda_i^t)=2(\log\det(A_t)-\log\det(A_0))\, . \]
Since $\text{Trace}(A_t) = \lambda_0+\sum_{j=1}^t \text{Trace}(X_j^{\top}X_j)\leq \lambda_0+B^2Nt$ if all covariates are bounded as $\|x_i^t\|_2\leq B$. Therefore from AM-GM inequality, since the determinant is the product of the eigenvalues, we have
\[\sum_{j=1}^t \sum_{i=1}^s \min\{1/s,\lambda_i^t\}\leq 2d\log\Big(\frac{\lambda_0+B^2Nt}{d}\Big)-2d\log(\lambda_0)\, .\]
Implying from \eqref{eqn:eig} that 
\begin{align*}
\sum_{t=1}^T\min\Big\{\sum_{i=1}^N\bar{x}_i^{t\top}A_{t-1}^{-1}\bar{x}_i^t, N\Big\}
&\leq N\sum_{t=1}^T\min\Big\{\sum_{i=1}^s \lambda_i^t,1\Big\}\leq Ns\sum_{t=1}^T \sum_{i=1}^s \min\{\lambda_i^t,1/s\} \\
&\leq  2dNs\log\Big(\frac{\lambda_0+B^2NT}{d}\Big)-2dNs\log(\lambda_0)\, ,
\end{align*}
as claimed.
\end{proof}

We are now ready to put things together to give the final one-sided loss bound for our algorithm.
\begin{proof}[Proof of Theorem~\ref{thm:ucb}]

For the cumulative one-sided loss, using Lemma~\ref{lem:rt} and Lemma~\ref{lem:det} above, and the conditional event that the instantaneous one-sided loss is bounded by $\Ct$, with probability at least $1-\delta$,
\begingroup
\allowdisplaybreaks
\begin{align*}
R_T &\leq \Ct\cdot K+\sum_{t=1}^T r_t\\
&\leq \Ct\cdot K+\sum_{t=1}^T \min\Big\{2\rho_t(\delta/2T) \cdot\sum_{i=1}^N\sqrt{\bar{x}_i^{t\top}A_{t-1}^{-1}\bar{x}_i^t}, N\Ct\Big\}\\
&\leq \Ct\cdot K+ 2\rho_T(\delta/2T)\cdot \sum_{t=1}^T  \min\Big\{\sum_{i=1}^N\sqrt{\bar{x}_i^{t\top}A_{t-1}^{-1}\bar{x}_i^t}, N\Big\}\\
&\leq \Ct\cdot K+2\rho_T(\delta/2T) \sqrt{TN}\cdot  \sqrt{\sum_{t=1}^T\min\Bigg\{ \sum_{i=1}^N\bar{x}_i^{t\top}A_{t-1}^{-1}\bar{x}_i^t, N\Bigg\}}\\
&\leq  \Ct\cdot K+2\rho_T(\delta/2T) \sqrt{TN}\cdot\sqrt{2dNs\log\Big(\frac{\lambda_0+B^2NT}{d}\Big)-2dNs\log(\lambda_0)}
\end{align*}
\endgroup
where we used the fact that $N\Ct\leq 2N\rho_T(\delta/2T)$ and Cauchy-Schwarz. Plugging in the definition of $\rho_T(\delta/2T) = \frac{2L}{\eta}\kappa \Ct \sqrt{2d\log T}\sqrt{\log(2dT/\delta)}$ finishes the proof.
\end{proof}

\section{Iterative Method}
\label{sec:iterative_method}
\begin{proof}[Proof of Proposition~\ref{prop:sgd}]
We begin by showing iterate contraction. Looking at the condition for potential gradient update, assuming that we have an upper bound as $\mathbb{E}[\|\beta_t-\beta^*\|_2]\leq d_t$ at iteration $t$, on the event that the distance to OPT satistifes $\|\beta^*-\beta_t\|_2\leq (1+\delta)\cdot d_t$, which happens with probability at least $1-\frac{1}{\delta}$ by Markov's inequality,
\begin{align*}
\mathbbm{1}\{\mu(x_t^{\top}\beta_t)+s_t\geq c\} &= \mathbbm{1}\left\{\mu(x_t^{\top}\beta^*)+s_t\geq c+\mu(x_t^{\top}\beta^*)-\mu(x_t^{\top}\beta_t)\right\}\\
&\geq \mathbbm{1}\left\{\mu(x_t^{\top}\beta^*)+s_t\geq c+L\|x_t\|_2\cdot \|\beta^*-\beta_t\|_2\right\}\\
&\geq \mathbbm{1}\left\{\mu(x_t^{\top}\beta^*)\geq c\right\}
\end{align*}
where we used that the exploration bonus $s_t = L\cdot (1+\delta)\cdot d_t\|x_t\|_2$ and the Lipschitz condition of $\mu'(\cdot)$. Now since $\mu(x_t^{\top}\beta^*)- \mathbb{E}[\mu(x^{\top}\beta^*)]$ is $CL\|\beta^*\|\sigma$-subgaussian for some numerical constant $C$, i.e., 
\[\mathbb{P}_x\left\{\mu(x_t^{\top}\beta^*)\geq c\right\}=\mathbb{P}\left\{\mu(x_t^{\top}\beta^*)\geq \mathbb{E}_x[\mu(x^{\top}\beta^*)]-\zeta\right\}\geq 1-e^{-\frac{\zeta^2}{2L^2\|\beta^*\|^2\sigma^2}}\, ,\]
therefore with probability at least $1-e^{-\frac{\zeta^2}{2L^2\|\beta^*\|^2\sigma^2}}-\delta^{-1}$, we accept $x_t$ and are presented with the corresponding response $y_t$. It remains to work out the update for $d_t$ such that $\mathbb{E}[\|\beta_t-\beta^*\|_2] \leq d_t$ holds at all iterations (so that we can set $s_t$ appropriately). 

For this, we have if $\left|y_t-\mu(x_t^{\top}\beta_t)\right| \leq \alpha+B$, thanks to the projection that maintains $\mu'(\cdot)\geq \gamma$ and the norm bound assumption on noise $\epsilon_t$,
\begin{align}
\label{eqn:intermediate}
\gamma|x_t^{\top}(\beta_t-\beta^*)|-B \leq  \left|\mu(x_t^{\top}\beta^*)-\mu(x_t^{\top}\beta_t)\right|-B \leq \left|y_t-\mu(x_t^{\top}\beta_t)\right| \leq \alpha+B\, ,
\end{align}
implying that we are already accurate enough on this sample as $|x_t^{\top}(\beta_t-\beta^*)|\leq \frac{\alpha+2B}{\gamma}$.

Otherwise if $\left|y_t-\mu(x_t^{\top}\beta_t)\right| > \alpha+B$, we have \begin{align*}
L|x_t^{\top}(\beta_t-\beta^*)|+B> \left|\mu(x_t^{\top}\beta^*)-\mu(x_t^{\top}\beta_t)\right|+B >\left|y_t-\mu(x_t^{\top}\beta_t)\right| > \alpha+B
\end{align*}
therefore $|x_t^{\top}(\beta_t-\beta^*)| > \frac{\alpha}{L}$,  
and making a gradient update gives the contraction \[\mathbb{E}[\|\beta_{t+1}-\beta^*\|_2^2 \,\vert\, \beta_t] \leq \|\beta_t-\beta^*\|_2^2+\left[\eta^2\cdot \mu'(z)^2\|x_t\|_2^2-2\eta\cdot\mu'(z)\right](x_t^{\top}(\beta_t-\beta^*))^2\]
at step $t$. 
%
Taking stepsize $\eta = \frac{1}{L\|x_t\|_2^2} < \frac{1}{\mu'(z)\cdot\|x_t\|_2^2}$, we have the distance to OPT progress recursion using~\eqref{eqn:iterate_contract} as
\[\mathbb{E}[\|\beta_{t+1}-\beta^*\|^2\,\vert\, \beta_t] \leq \|\beta_t-\beta^*\|_2^2 - \frac{\alpha^2}{\|x_t\|_2^2 L^2}\, .\]
Using law of total expectation and putting everything together, at iteration $t$ of the algorithm, we have with probability at least $1-\rho := 1-e^{-\frac{\zeta^2}{2L^2\|\beta^*\|^2\sigma^2}}-\delta^{-1}$, either 
\begin{equation}
\label{eqn:recursion}
\mathbb{E}[\|\beta_{t+1}-\beta^*\|^2] \leq \mathbb{E}[\|\beta_t-\beta^*\|_2^2] - \frac{\alpha^2}{\|x_t\|_2^2 L^2}\, ,
\end{equation}
or in light of~\eqref{eqn:intermediate},
\begin{align*}
\left|\mu(x_t^{\top}\beta^*)-\mu(x_t^{\top}\beta_t)\right| &\leq L\cdot |x_t^{\top}(\beta^*-\beta_t)| \leq \frac{L}{\gamma} (\alpha+2B)\, ,
\end{align*}
otherwise with the remaining probability $\rho$ the distance to OPT stays the same since we don't accept the sample. 
%
Equation~\eqref{eqn:recursion} therefore prescribes that we should update $d_t$ as
\begin{align*}
\mathbb{E}\left[\|\beta_t-\beta^*\| \right] \leq \sqrt{\mathbb{E}\left[\|\beta_t-\beta^*\|^2\right]} \leq  \left(\mathbb{E}[\|\beta_{t-1}-\beta^*\|_2^2] - \frac{\alpha^2}{\|x_{t-1}\|_2^2 L^2}\right)^{1/2}\, .
\end{align*}
This concludes the first part of the claim.
%
Turning to misclassification error, since again by sub-gaussianity,
\[\mathbb{P}_x\{\mu(x_t^{\top}\beta^*)\geq c\} = \mathbb{P}\left\{\mu(x_t^{\top}\beta^*)\geq \mathbb{E}_x[\mu(x^{\top}\beta^*)]-\zeta\right\}\geq 1-e^{-\frac{\zeta^2}{2L^2\|\beta^*\|^2\sigma^2}}\, ,\]
we focus on this side of the error. In this case, we don't make a mistake w.r.t the oracle predictor $\mathbbm{1}\{\mu(x_t^{\top}\beta^*) \geq c\}$ at step $t$ if $\mu(x_t^{\top}\beta_t)+s_t \geq c$, in which case we predict ``accept" and is revealed $y_t$.
To bound the probability, we observe that
\begin{align*}
 \mathbb{P}(\mu(x_t^{\top}\beta_t) +s_t\geq c) &= \mathbb{P}(\mu(x_t^{\top}\beta_t) +\mu(x_t^{\top}\beta^*)-\mu(x_t^{\top}\beta^*)+s_t\geq c)  \\
 &\geq \mathbb{P}(s_t \geq \mu(x_t^{\top}\beta^*)-\mu(x_t^{\top}\beta_t))\\
 &= \mathbb{P}(L\cdot(1+\delta)\cdot d_t\|x_t\|_2\geq \mu(x_t^{\top}\beta^*)-\mu(x_t^{\top}\beta_t))\\
&\geq \mathbb{P}(\|\beta_t-\beta^*\|_2 \leq (1+\delta)\cdot d_t)\\
&\geq 1-\frac{\mathbb{E}[\|\beta_t-\beta^*\|_2]}{\delta\cdot d_t} \geq 1-\frac{1}{\delta}\, .
\end{align*}
%
%
Consequently, picking $\delta^{-1} =\rho-e^{-\frac{\zeta^2}{2L^2\|\beta^*\|^2\sigma^2}}$ in setting $s_t$ and assuming $\zeta \geq \sqrt{2L\|\beta^*\|^2\sigma^2\log(\rho^{-1})}$, 
\begin{align*}
    \mathbb{P}(\text{making a mistake at step $t$}) &\leq \mathbb{P}(\mu(x_t^{\top}\beta^*)<c)+\mathbb{P}(\mu(x_t^{\top}\beta_t)+s_t<c \,\vert\, \mu(x_t^{\top}\beta^*) \geq c)\\
    &\leq e^{-\frac{\zeta^2}{2L^2\|\beta^*\|^2\sigma^2}} + \frac{1}{\delta} \leq \rho
\end{align*}
for any $\rho \in (0,1)$.
\end{proof}






\section{Additional Experiment Results}
We include additional tables and plots in this section to further support our findings, starting on next page.
\label{sec:appendix_simulation}

\begin{table}[t]
\setlength{\tabcolsep}{8pt}
\centering
\begin{tabular}{c|c|c|c|c|c|c|c|c}
\hline 
   Dataset & c & greedy & $\epsilon$-grdy & os-$\epsilon$-grdy   & noise           & os-noise     & margin   & ours        \\ \hline\hline
 \multirow{10}{*}{Adult} & 50\% & 239.45 & 236.34 & 211.74 & 230.77 & 165.77 & 162.31 & {\bf 144.92} \\ \cline{2-9}
  & 55\% & 175.88 & 175.46 & 170.21 & 175.26 & 143.64 & 135.9 & {\bf 118.7} \\ \cline{2-9}
  & 60\% & 140.18 & 138.37 & 138.13 & 138.53 & 126.76 & 125.37 & {\bf 114.53} \\ \cline{2-9}
  & 65\% & 129.29 & 128.39 & 128.26 & 126.8 & 123.48 & 120.94 & {\bf 115.62} \\ \cline{2-9}
  & 70\% & 134.74 & 134.18 & 133.8 & 131.66 & 132.39 & 132.67 & {\bf 129.81} \\ \cline{2-9}
  & 75\% & 145.99 & 145.14 & 146.09 & {\bf 144.38} & 146.26 & 145.28 & 145.86 \\ \cline{2-9}
  & 80\% & 188.48 & 186.69 & 186.03 & 185.47 & {\bf 184.68} & 185.44 & 186.2 \\ \cline{2-9}
  & 85\% &  246.93 & 244.71 & 244.93 & 243.14 & 243.77 & {\bf 242.94} & 243.16 \\ \cline{2-9}
  & 90\% & 318.33 & 295.72 & {\bf 290.51} & 279.49 & 293.3 & 294.4 & 293.14 \\ \cline{2-9}
  & 95\% &  179.24 & 146.21 & 158.73 & {\bf 131.2} & 148.85 & 131.75 & 152.95 \\ \hline
 \multirow{10}{*}{Bank} & 50\% & 164.23 & 162.67 & 117.86 & 136.0 & 88.49 & 86.26 & {\bf 74.64 } \\ \cline{2-9}
 & 55\% & 142.01 & 138.29 & 107.39 & 125.15 & 83.59 & 82.46 & {\bf 71.68} \\ \cline{2-9}
 & 60\% & 141.04 & 139.42 & 110.72 & 131.61 & 90.86 & 89.29 & {\bf 81.02} \\ \cline{2-9}
 & 65\% & 146.56 & 140.44 & 121.98 & 135.96 & 100.98 & 96.59 & {\bf 94.46 } \\ \cline{2-9}
 & 70\% & 207.6 & 197.0 & 185.9 & 198.66 & 153.3 & 150.75 & {\bf 137.24 }\\ \cline{2-9}
 & 75\% & 166.72 & 166.08 & 166.81 & 162.88 & 153.38 & 150.07 & {\bf 142.5} \\ \cline{2-9}
 & 80\% & 148.93 & 147.06 & 148.75 & 148.18 & 142.68 & 137.37 & {\bf 134.17} \\ \cline{2-9}
 & 85\% & 145.63 & 125.29 & 122.99 & 122.59 & 122.28 & 130.96 & {\bf 119.19} \\ \cline{2-9}
 & 90\% & 104.98 & 102.89 & 104.59 & 103.0 & 102.1 & {\bf 101.1} & 104.19 \\ \cline{2-9}
 & 95\% &  119.77 & 119.06 & {\bf 116.17} & 119.42 & 119.48 & 119.63 & 119.73 \\ \hline
 \multirow{10}{*}{COMPAS} & 50\% & 41.56 & 36.67 & 36.93 & 36.93 & 28.09 & 28.12 & {\bf 26.01} \\  \cline{2-9}
 & 55\% & 41.71 & 38.18 & 39.22 & 37.72 & 33.47 & 31.85 & {\bf 31.17} \\  \cline{2-9}
 & 60\% & 44.83 & 44.02 & 42.78 & 42.2 & 34.84 & 36.56 & {\bf 34.48} \\  \cline{2-9}
 & 65\% & 47.33 & 40.04 & 40.06 & 38.23 & 35.44 & 33.7 & {\bf 32.52} \\  \cline{2-9}
 & 70\% & 41.66 & 39.16 & 39.61 & 39.87 & 38.03 & 36.98 & {\bf 34.07} \\  \cline{2-9}
 & 75\% & 46.14 & 40.49 & 41.12 & 37.84 & 35.11 & 34.27 & {\bf 33.89} \\  \cline{2-9}
 & 80\% &  45.8 & 45.25 & 44.88 & 44.58 & 43.97 & 41.11 & {\bf 40.63} \\  \cline{2-9}
 & 85\% & 58.59 & 54.62 & 50.54 & 50.52 & 43.56 & 46.24 & {\bf 41.93} \\  \cline{2-9}
 & 90\% & 33.42 & 32.1 & 33.66 & {\bf 28.79} & 31.71 & 31.91 & 30.47 \\  \cline{2-9}
 & 95\% & 20.38 & 20.34 & 20.37 & 20.31 & 20.38 & {\bf 19.68} & 20.35 \\ \hline
 \multirow{10}{*}{Crime}  & 50\%  & 15.77 & 15.77 & 15.5 & 15.66 & 14.93 & 14.73 & {\bf 13.95}       \\  \cline{2-9}
  & 55\%  & 14.64 & 14.52 & 14.64 & 14.27 & 14.46 & 14.46 & {\bf  14.07}      \\  \cline{2-9}
 & 60\% & 17.44 & 17.42 & 17.31 & 17.12 & 16.99 & 16.35 & {\bf 15.95} \\  \cline{2-9}
 & 65\% & 18.64 & 18.59 & 18.72 & 18.27 & 18.52 & 18.53 & {\bf 18.15} \\  \cline{2-9}
 & 70\% & 22.0 & 21.75 & 21.99 & 20.33 & 20.63 & 20.1 & {\bf 19.19} \\  \cline{2-9}
 & 75\% & 21.87 & 21.87 & 21.74 & 21.33 & 21.05 & 21.4 & {\bf 20.75} \\  \cline{2-9}
 & 80\% & 23.03 & 22.94 & 23.38 & 22.46 & 22.61 & 22.58 & {\bf 21.87} \\  \cline{2-9}
 & 85\% &  23.75 & 23.46 & 23.82 & 23.75 & 23.65 & 23.51 & {\bf 22.68} \\  \cline{2-9}
 & 90\% & 22.43 & 22.19 & 22.34 & 21.65 & 22.13 & 21.3 & {\bf 20.88} \\  \cline{2-9}
 & 95\% & 12.34 & 12.34 & 12.43 & 12.34 & {\bf 12.21} & 12.34 & 12.34 \\ \hline
 \multirow{10}{*}{German} & 50\% & 14.7 & 14.51 & 14.12 & 13.62 & 11.12 & 10.52 & {\bf 9.63} \\  \cline{2-9}
 & 55\% & 12.43 & 12.3 & 12.24 & 12.42 & 11.06 & 10.98 & {\bf 9.43} \\  \cline{2-9}
 & 60\% & 15.16 & 14.48 & 14.2 & 14.09 & 13.83 & 13.32 & {\bf 11.68} \\  \cline{2-9}
 & 65\% &  17.02 & 16.39 & 16.52 & 15.82 & 13.75 & 13.86 & {\bf 12.48} \\  \cline{2-9}
 & 70\% & 15.89 & 15.53 & 15.93 & 15.41 & 14.09 & 14.52 & {\bf 13.07} \\  \cline{2-9}
 & 75\% &  15.44 & 15.26 & 15.13 & 14.86 & 14.49 & 14.95 & {\bf 14.2} \\ \cline{2-9}
 & 80\% & 12.8 & 12.69 & 12.87 & 12.61 & 12.68 & 12.63 & {\bf 12.45} \\  \cline{2-9}
 & 85\% & 11.55 & 11.45 & 11.23 & 11.38 & 11.27 & 11.23 & {\bf 10.98} \\  \cline{2-9}
 & 90\% & 10.09 & 9.96 & 10.14 & {\bf 9.07} & 9.84 & 9.97 & 9.93 \\  \cline{2-9}
 & 95\% & 8.23 & 8.13 & 8.23 & {\bf 7.59} & 7.97 & 8.18 & 8.22 \\ \hline
\end{tabular}
\caption{\label{tab:linear_table_all_1}  {\bf  Linear Regression Results}. }
\end{table}

\begin{table}[t]
\setlength{\tabcolsep}{8pt}
\centering
\begin{tabular}{c|c|c|c|c|c|c|c|c}
\hline
   Dataset & c &greedy & $\epsilon$-grdy & os-$\epsilon$-grdy   & noise           & os-noise     & margin   & ours        \\ \hline\hline
 \multirow{10}{*}{Blood} & 50\% & 2.06 & 2.06 & 2.06 & 2.06 & 1.92 & 1.72 & {\bf 1.52} \\ \cline{2-9}
 & 55\% &  1.4 & 1.4 & 1.4 & {\bf 1.36} & 1.39 & 1.39 & 1.38 \\ \cline{2-9}
 & 60\% & 3.63 & 2.72 & 3.11 & 1.94 & 1.91 & 1.96 & {\bf 1.87} \\ \cline{2-9}
 & 65\% & 3.28 & 2.74 & 2.07 & 2.81 & 2.02 & 1.69 & {\bf 1.59} \\ \cline{2-9}
 & 70\% & 3.7 & 2.78 & 3.04 & {\bf 2.38} & 3.13 & 3.06 & 2.65 \\ \cline{2-9}
 & 75\% & 5.03 & 3.91 & 4.16 & 3.29 & 4.08 & 3.99 & {\bf 3.13} \\  \cline{2-9}
 & 80\% & 4.16 & 3.32 & 4.12 & 3.07 & {\bf 3.06} & 3.92 & 3.58 \\ \cline{2-9}
 & 85\% & 4.1 & 3.73 & 3.58 & {\bf 3.28} & 3.98 & 4.05 & 3.67 \\ \cline{2-9}
 & 90\% & 5.09 & 4.58 & 5.11 & {\bf 3.97} & 4.26 & 4.51 & 4.66 \\ \cline{2-9}
 & 95\% &  2.64 & 2.59 & 2.68 & 2.61 & 2.57 & 2.56 & {\bf 2.55} \\ \hline
 \multirow{10}{*}{Diabetes} & 50\% & 4.17 & 4.16 & 4.23 & 3.94 & 3.81 & 3.95 & {\bf 3.61} \\ \cline{2-9}
 & 55\% & 4.93 & 4.93 & 4.97 & 4.88 & 4.79 & 4.9 & {\bf 4.74} \\ \cline{2-9}
 & 60\% &  6.01 & 6.01 & 5.92 & 5.83 & 5.75 & 5.97 & {\bf 5.65} \\ \cline{2-9}
 & 65\% & 5.45 & 5.45 & 5.48 & 5.29 & 5.32 & 5.3 & {\bf 5.25} \\ \cline{2-9}
 & 70\% & 6.05 & 5.56 & 6.14 & 6.05 & 5.6 & 5.39 & {\bf 5.33} \\ \cline{2-9}
 & 75\% & 7.61 & 7.57 & 6.74 & 6.69 & 6.21 & 6.35 & {\bf 5.5} \\ \cline{2-9}
 & 80\% & 8.18 & 7.9 & 8.24 & 7.64 & 8.01 & 7.06 & {\bf 6.65} \\ \cline{2-9}
 & 85\% &  6.84 & 6.84 & 6.84 & 6.84 & 6.75 & 6.69 & {\bf 6.64} \\ \cline{2-9}
 & 90\% & 5.78 & 5.73 & 5.86 & 5.47 & 5.65 & 5.53 & {\bf 5.51} \\ \cline{2-9}
 & 95\% & 4.52 & 4.52 & 4.56 & 4.36 & 4.37 & 4.29 & {\bf 4.27} \\ \hline
 \multirow{10}{*}{EEG Eye} & 50\% & 256.47 & 200.04 & 175.8 & 173.52 & 106.26 &{\bf 96.85} & 119.7 \\ \cline{2-9}
 & 55\% &  227.08 & 191.27 & 169.19 & 177.03 & 118.03 & {\bf 109.69} & 128.09 \\ \cline{2-9}
 & 60\% & 196.1 & 169.52 & 163.42 & 155.87 & 121.5 & 121.09 & {\bf 119.67} \\ \cline{2-9}
 & 65\% & 162.28 & 159.8 & 154.73 & 148.5 & 133.16 & 130.01 & {\bf 129.27} \\ \cline{2-9}
 & 70\% & 175.71 & 167.94 & 168.73 & 157.68 & 167.52 & 160.76 & {\bf 155.79} \\ \cline{2-9}
 & 75\% &  157.93 & 147.06 & 154.61 & 146.3 & {\bf 123.48} & 124.6 & 136.1\\ \cline{2-9}
 & 80\% &  164.19 & 140.15 & 139.47 & 133.81 & 142.57 & {\bf 125.39} & 149.71 \\ \cline{2-9}
 & 85\% & 143.94 & 125.09 & 118.29 & {\bf 115.81} & 117.24 & 131.08 & 136.51 \\\cline{2-9}
 & 90\% & 121.78 & 116.0 & 121.3 & {\bf 104.72} & 115.71 & 115.27 & 117.57 \\ \cline{2-9}
 & 95\% & 149.06 & {\bf 142.67} & 145.99 & 139.5 & 151.72 & 148.77 & 149.09 \\ \hline
 \multirow{10}{*}{Australian} & 50\% & 3.74 & 3.74 & 3.77 & 3.63 & 3.0 & 2.79 & {\bf 2.65} \\ \cline{2-9}
 & 55\% & 3.38 & 3.38 & 3.38 & 3.26 & 2.96 & 3.19 & {\bf 2.69} \\ \cline{2-9}
 & 60\% & 5.0 & 4.97 & 5.0 & 4.33 & {\bf 3.73} & 3.99 & 3.75 \\ \cline{2-9} 
 & 65\% & 4.69 & 4.57 & 4.69 & 4.3 & 3.88 & {\bf 3.84} & 3.9 \\ \cline{2-9}
 & 70\% & 6.77 & 6.77 & 6.77 & 6.66 & 5.09 & 5.26 & {\bf 4.65} \\ \cline{2-9}
 & 75\% & 5.78 & 5.77 & 5.78 & 5.57 & 4.99 & 4.8 & {\bf 4.77} \\ \cline{2-9}
 & 80\% & 5.43 & 5.43 & 5.43 & 5.21 & 5.27 & 5.25 & {\bf 4.85} \\ \cline{2-9}
 & 85\% & 5.09 & 5.09 & 5.09 & {\bf 5.01} & 5.03 & 4.98 & 5.06 \\ \cline{2-9}
 & 90\% & 4.14 & 4.14 & 4.14 & {\bf 3.93} & 4.08 & 4.07 & 4.11 \\ \cline{2-9}
 & 95\% & 2.15 & 2.14 & 2.27 &{ \bf 2.08} & 2.15 & 2.15 & 2.15 \\ \hline
 \multirow{10}{*}{Churn} & 50\% & 46.98 & 43.65 & 30.65 & 36.64 & 21.24 & 18.83 & {\bf 14.89} \\ \cline{2-9}
 & 55\% & 57.49 & 51.72 & 47.36 & 46.52 & 30.75 & {\bf 23.76} & 24.39 \\ \cline{2-9}
 & 60\% &  61.44 & 56.94 & 50.43 & 55.59 & 35.65 & 32.82 & {\bf 29.55} \\ \cline{2-9}
 & 65\% &  40.83 & 38.89 & 37.29 & 37.67 & 29.44 & 29.33 & {\bf 26.14} \\ \cline{2-9}
 & 70\% & 49.99 & 47.84 & 47.91 & 49.89 & 41.18 & 36.17 & {\bf 35.27} \\ \cline{2-9}
 & 75\% & 58.96 & 56.91 & 58.34 & 56.99 & 55.42 & 53.88 & {\bf 49.48} \\ \cline{2-9}
 & 80\% & 52.66 & 50.43 & 51.85 & 51.23 & 48.62 & 48.74 & {\bf 48.25} \\ \cline{2-9}
 & 85\% & 52.41 & 50.66 & 49.02 & {\bf 45.95} & 50.76 & 49.67 & 49.89 \\ \cline{2-9}
 & 90\% & 60.33 & {\bf 59.5 } & 60.37 & 59.98 & 59.78 & 59.82 & 59.97 \\ \cline{2-9}
 & 95\% & 56.36 & 54.33 & 54.7 & {\bf 53.09} & 54.29 & 55.91 & 56.39 \\ \hline
\end{tabular}
\caption{\label{tab:linear_table_all_2}  {\bf  Linear Regression Results (continued)}. }
\end{table}


 \begin{figure}[t]
  \begin{center}
    \includegraphics[width=0.8\textwidth]{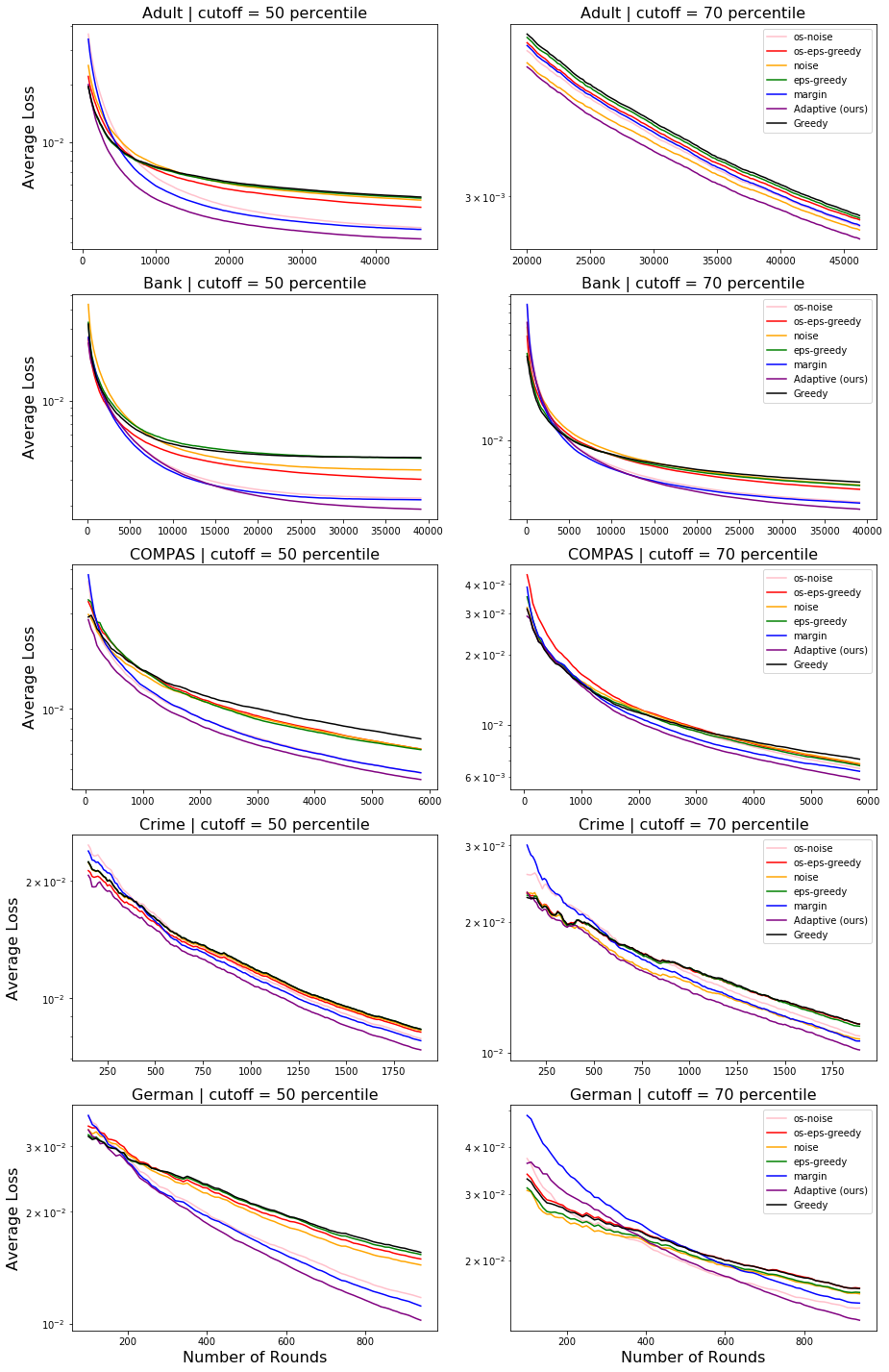}
    \end{center}
  \caption{{\bf Linear Regression plots}. }
	\label{fig:linear_plots_all_1}
\end{figure}

 \begin{figure}[t]
  \begin{center}
    \includegraphics[width=0.8\textwidth]{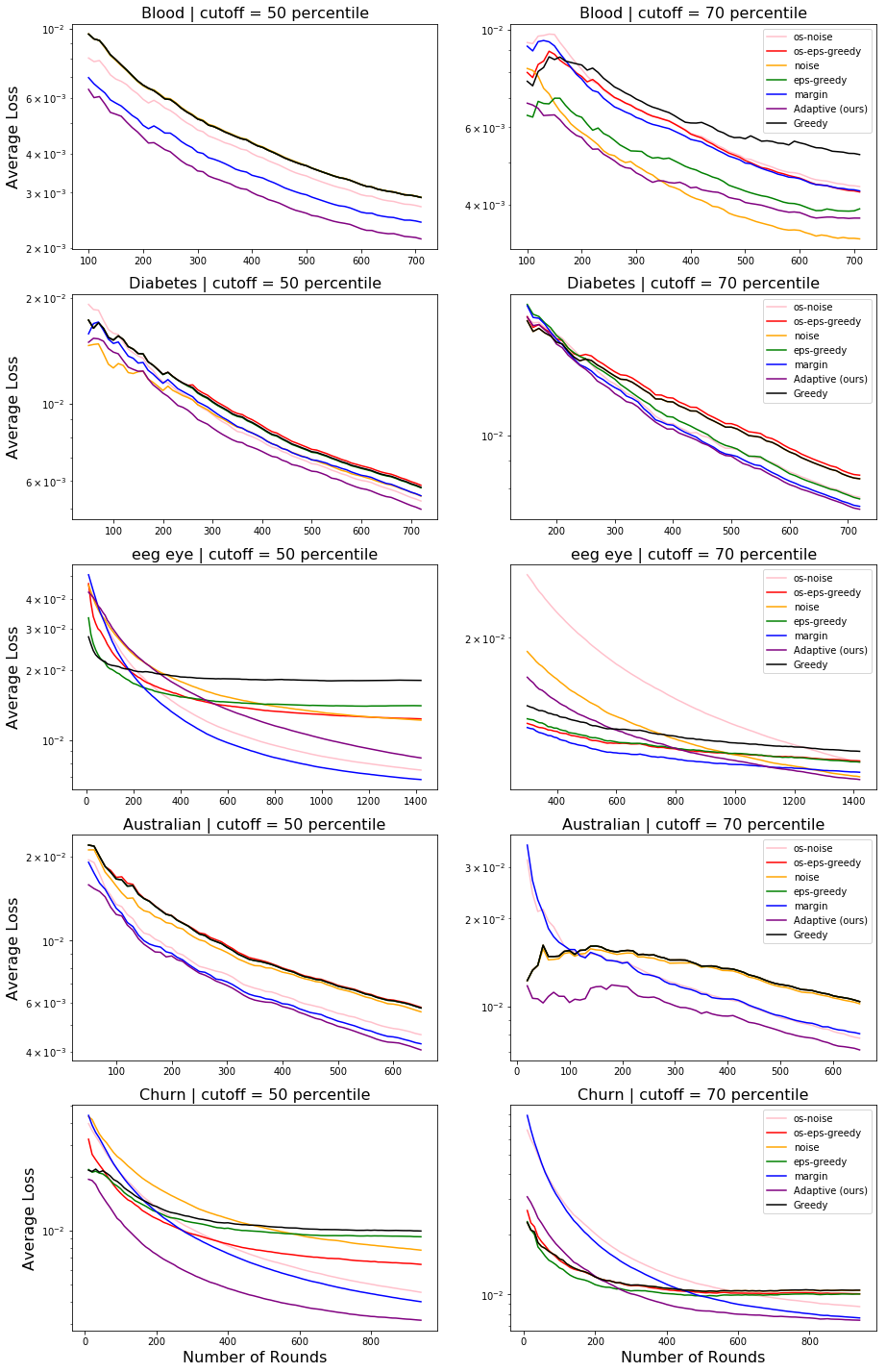}
    \end{center}
  \caption{{\bf Linear Regression plots (continued)}. }
	\label{fig:linear_plots_all_2}
\end{figure}


\begin{table}[t]
\setlength{\tabcolsep}{8pt}
\centering
\begin{tabular}{c|c|c|c|c|c|c|c|c}
\hline
   Dataset & c &greedy & $\epsilon$-grdy & os-$\epsilon$-grdy   & noise           & os-noise     & margin   & ours        \\ \hline \hline
   \multirow{10}{*}{Adult} & 50\% & 43.48 & 43.55 & 43.48 & 43.35 & 43.41 & 43.38 & {\bf 42.63} \\ \cline{2-9}
   & 55\% & 59.78 & 59.9 & 59.79 & 59.61 & {\bf 59.58} & 59.64 &  59.61 \\ \cline{2-9}
   & 60\% & 77.25 & 77.44 & 77.26 & 76.98 & {\bf 76.48} & 76.58 & 77.05 \\ \cline{2-9}
   & 65\% & 95.62 & 95.86 & 95.76 & 95.43 & 94.95 & 94.75 & {\bf 93.13} \\ \cline{2-9}
   & 70\% & 102.86 & 102.86 & 102.9 & 102.6 & 102.81 & 102.47 & {\bf 100.06} \\ \cline{2-9}
   & 75\% & 123.34 & 123.55 & 123.11 & 120.7 & 120.08 & 119.94 & {\bf 118.38} \\ \cline{2-9}
   & 80\% & 117.39 & 117.44 & 117.56 & 115.95 & 116.39 & 115.5 & {\bf 112.05} \\ \cline{2-9}
   & 85\% & 94.83 & 94.94 & 94.83 & 94.71 & 93.4 & 92.53 & {\bf 90.86} \\ \cline{2-9}
   & 90\% & 77.83 & 78.05 & 78.36 & 77.12 & 76.61 & 77.24 & {\bf 75.22} \\ \cline{2-9}
   & 95\% & 32.12 & 32.48 & 32.8 & 31.86 & 31.14 & 31.47 & {\bf 30.81} \\ \hline
    \multirow{10}{*}{Bank} & 50\% & 23.22 & 23.26 & {\bf 23.18} & 23.3 & 23.33 & 23.2 & 23.23 \\ \cline{2-9}
   & 55\% & 34.55 & 34.55 & 34.53 & {\bf 34.48} & 34.6 & 34.52 & 34.58\\ \cline{2-9}
   & 60\% & 49.57 & 49.56 & 49.58 & 49.52 & 49.86 & {\bf 49.5} & 49.56\\ \cline{2-9}
   & 65\% & 71.09 & {\bf 70.76} & 71.05 & 71.16 & 71.33 & 71.01 &  71.01 \\ \cline{2-9}
   & 70\% & 85.72 & 85.94 & 85.67 & 85.51 & {\bf 85.26} & 85.27 & 85.75 \\ \cline{2-9}
   & 75\% & 115.67 & 115.52 & 115.71 & {\bf 114.63} & 115.42 & 115.19 & 115.79 \\ \cline{2-9}
   & 80\% & 149.07 & 148.31 & 149.12 & 147.86 & 147.34 & 148.19 & {\bf 147.1} \\ \cline{2-9}
   & 85\% & 162.05 & 161.33 & 162.16 & 159.92 & 159.59 & 160.8 & {\bf 158.34} \\ \cline{2-9}
   & 90\% & 135.85 & 135.48 & 135.76 & 134.89 & 132.68 & 131.4 & {\bf 121.53} \\ \cline{2-9}
   & 95\% & 98.26 & 98.07 & 98.15 & 96.57 & 97.01 & 96.52 & {\bf 89.9} \\ \hline
    \multirow{10}{*}{COMPAS} & 50\% & 44.47 & 43.88 & 44.15 & 43.07 & 42.11 & 42.64 & {\bf 40.34} \\ \cline{2-9}
    & 55\% & 47.87 & 47.45 & 47.85 & 47.19 & 46.98 & 46.61 & {\bf 45.51} \\ \cline{2-9}
    & 60\% & 44.38 & 43.91 & 44.36 & 43.95 & 41.94 & 41.62 & {\bf 40.62} \\ \cline{2-9}
    & 65\% & 42.54 & 41.98 & 41.59 & 40.69 & 39.94 & 40.58 & {\bf  39.63} \\ \cline{2-9}
    & 70\% & 43.7 & 43.59 & {\bf 43.41} & 43.66 & 43.83 & 43.7 & 43.7 \\ \cline{2-9}
    & 75\% & 35.85 & 35.8 & 35.17 & {\bf 33.77} & 35.51 & 35.51 & 35.07 \\ \cline{2-9}
    & 80\% & 38.79 & 38.74 & 38.85 & {\bf 36.71} & 38.29 & 37.16 & 36.79 \\ \cline{2-9}
    & 85\% & 27.51 & 27.51 & 27.47 & 27.28 & {\bf 27.14 } & 27.19 & 27.28 \\ \cline{2-9}
    & 90\% & 21.98 & 21.98 & 21.98 & 21.55 & {\bf 20.16} & 20.69 & 21.04 \\ \cline{2-9}
    & 95\% & 16.72 & {\bf 15.93} & 16.46 & 16.63 & 16.63 & 16.71 & 16.24 \\ \hline
    \multirow{10}{*}{Crime} & 50\% & 11.04 & 10.83 & 11.04 & 10.85 & 10.33 & 10.44 & {\bf 9.42} \\ \cline{2-9}
    & 55\% & 12.85 & 12.63 & 12.85 & 12.14 & 12.36 & 12.14 & {\bf 11.45} \\ \cline{2-9}
    & 60\% & 16.75 & 16.75 & 16.79 & 16.81 & 16.21 & 15.77 & {\bf 15.04} \\ \cline{2-9}
    & 65\% & 26.51 & 26.51 & 26.01 & 25.21 & 23.3 & 23.27 & {\bf 22.0} \\ \cline{2-9}
    & 70\% & 26.05 & 25.93 & 26.13 & 25.94 & 25.84 & 25.55 & {\bf 24.46} \\ \cline{2-9}
    & 75\% & 29.17 & 28.65 & 29.22 & 27.99 & 27.92 & 27.06 & {\bf 26.34} \\ \cline{2-9}
    & 80\% & 31.01 & 31.01 & 31.01 & 29.68 & 30.6 & 30.4 & {\bf 30.08} \\ \cline{2-9}
    & 85\% &  22.33 & 22.33 & 22.33 & 22.11 & 21.82 & 21.33 & {\bf 21.3} \\ \cline{2-9}
    & 90\% & 13.3 & 13.3 & 13.3 & {\bf 12.62} & 12.74 & 12.88 & 12.67 \\ \cline{2-9}
    & 95\% &  2.96 & 2.96 & {\bf 2.95} & 2.96 & 3.03 & 2.96 & 2.96 \\ \hline
    \multirow{10}{*}{German} & 50\% & 35.71 & 35.21 & 33.55 & 33.35 & 24.19 & 23.19 & {\bf 20.33} \\ \cline{2-9}
    & 55\% & 37.24 & 34.7 & 35.09 & 37.01 & 29.42 & 26.65 & {\bf 23.74} \\ \cline{2-9}
    & 60\% & 42.12 & 39.95 & 39.12 & 37.53 & 31.77 & 29.35 & {\bf 25.19} \\ \cline{2-9}
    & 65\% & 35.27 & 35.14 & 34.63 & 33.39 & 31.23 & 30.55 & {\bf 28.16} \\ \cline{2-9}
    & 70\% & 42.55 & 41.14 & 42.18 & 40.98 & 40.64 & 40.3 & {\bf 37.12} \\ \cline{2-9}
    & 75\% & 31.49 & 31.41 & 31.49 & 31.26 & 31.02 & 30.77 & {\bf 29.9} \\ \cline{2-9}
    & 80\% & 31.83 & 31.67 & 31.83 & 30.76 & 30.0 & 29.94 & {\bf 29.11} \\ \cline{2-9}
    & 85\% & 29.61 & 29.61 & 29.61 & {\bf 29.08} & 29.26 & 29.31 & 29.45 \\ \cline{2-9}
    & 90\% & 24.73 & 24.73 & {\bf 24.54} & 24.41 & 24.61 & 24.62 & 24.7 \\ \cline{2-9}
    & 95\% & 18.19 & 18.19 & 18.19 & 17.78 & 17.88 & 17.88 & {\bf 17.7} \\ \hline
\end{tabular}
\vspace{0.2cm}
\caption{\label{tab:linear_table_all}  {\bf  Logistic Regression Results}. }
\end{table}

\begin{table}[t]
\setlength{\tabcolsep}{8pt}
\centering
\begin{tabular}{c|c|c|c|c|c|c|c|c}
\hline
   Dataset & c &greedy & $\epsilon$-grdy & os-$\epsilon$-grdy   & noise           & os-noise     & margin   & ours        \\ \hline \hline
 \multirow{10}{*}{Blood} & 50\% & 5.05 & 5.05 & 4.87 & 4.83 & 4.71 & 4.53 & {\bf 4.24} \\ \cline{2-9}
 & 55\% &  6.44 & 6.44 & 6.16 & 6.46 & 6.17 & 6.15 & {\bf 5.91} \\ \cline{2-9}
 & 60\% & 6.97 & 6.97 & 6.97 & 6.87 & 6.19 & 5.5 & {\bf 5.45} \\ \cline{2-9}
 & 65\% & 11.34 & 11.34 & 11.34 & 11.59 & 9.02 & 8.72 & {\bf 8.18} \\ \cline{2-9}
 & 70\% & 13.04 & 13.04 & 13.03 & 13.04 & 10.84 & 12.14 & {\bf 9.69} \\ \cline{2-9}
 & 75\% & 6.13 & 6.13 & 6.13 & 5.76 & 5.44 & 4.54 & {\bf 4.31} \\ \cline{2-9}
 & 80\% & 8.92 & 8.88 & 8.92 & 8.77 & 9.05 & 8.79 & {\bf 8.03} \\ \cline{2-9}
 & 85\% &  2.63 & 2.63 & 2.63 & {\bf 2.54} & 2.62 & 2.63 & 2.59 \\ \cline{2-9}
 & 90\% & 6.98 & 6.98 & 6.98 & 6.88 & 6.98 & 6.98 & {\bf 5.89} \\ \cline{2-9}
 & 95\% & 5.63 & 5.63 & 5.63 & 5.25 & 5.43 & 5.55 & {\bf 5.22} \\ \hline
 \multirow{10}{*}{Diabetes} & 50\% & 28.23 & 28.23 & 27.75 & 27.22 & 26.67 & 26.18 & {\bf 25.16} \\ \cline{2-9}
 & 55\% & 25.18 & 25.18 & 25.17 & 24.89 & 24.35 & 24.58 & {\bf 24.28} \\ \cline{2-9}
 & 60\% & 26.51 & 25.85 & 26.17 & 25.12 & 25.25 & 25.1 & {\bf 24.78} \\ \cline{2-9}
 & 65\% & 29.47 & 29.32 & 29.14 & 29.05 & 28.91 & 28.86 & {\bf 28.66} \\ \cline{2-9}
 & 70\% & 29.36 & 28.0 & 27.79 & 28.0 & {\bf 27.4} & 27.9 & 28.11 \\ \cline{2-9}
 & 75\% & 26.99 & 26.96 & 26.52 & {\bf 25.52} & 26.45 & 26.47 & 26.42 \\ \cline{2-9}
 & 80\% & 25.9 & 24.65 & 25.58 & {\bf 24.9} & 25.71 & 25.64 & 25.86 \\ \cline{2-9}
 & 85\% & 21.11 & {\bf 20.94} & 21.01 & 21.36 & 21.12 & 21.11 & 21.07 \\ \cline{2-9}
 & 90\% & 24.23 & 23.82 & 24.23 & {\bf 23.72} & 24.12 & 24.01 & 24.12 \\ \cline{2-9}
 & 95\% & 12.34 & 12.25 & 12.33 & 12.31 & 12.34 & 12.31 & {\bf 12.16} \\ \hline
\multirow{10}{*}{EEG Eye} & 50\% & 239.33 & 238.92 & 239.09 & 236.65 & 200.61 & 201.51 & {\bf 187.28} \\ \cline{2-9}
 & 55\% & 239.03 & 238.71 & 238.66 & 239.85 & 217.58 & 217.15 & {\bf 206.65} \\ \cline{2-9}
 & 60\% & 227.14 & 227.13 & 226.59 & 223.53 & 218.24 & 219.47 & {\bf 211.05} \\ \cline{2-9}
 & 65\% & 222.98 & 218.47 & 220.71 & 218.1 & 211.26 & 210.72 & {\bf 199.73} \\ \cline{2-9}
 & 70\% & 209.48 & 207.89 & 208.83 & 206.63 & 204.94 & 205.4 & {\bf 199.04} \\ \cline{2-9}
 & 75\% & 194.56 & 193.68 & 194.44 & 193.3 & 194.25 & 193.04 & {\bf 189.83} \\ \cline{2-9}
 & 80\% & 208.11 & 207.76 & 207.95 & 208.23 & 208.84 & 207.93 & {\bf 202.14} \\ \cline{2-9}
 & 85\% & 186.23 & 186.23 & 186.25 & 184.02 & 185.49 & 186.12 & {\bf 178.63} \\ \cline{2-9}
 & 90\% & 182.61 & 181.96 & 179.12 & 177.37 & 180.55 & 180.83 & {\bf 176.03} \\ \cline{2-9}
 &  95\% &  160.11 & 160.15 & 159.98 & 159.79 & 159.99 & 159.62 & {\bf 156.21} \\ \hline
\multirow{10}{*}{Australian} & 50\% & 21.88 & 21.88 & 21.87 & 21.21 & 21.76 & 20.81 & {\bf 20.38} \\ \cline{2-9}
 & 55\% &  22.7 & 22.61 & 22.7 & 22.3 & 21.62 & 21.91 & {\bf 20.2} \\ \cline{2-9}
 & 60\% & 21.23 & 21.15 & 21.0 & 21.09 & 20.58 & 21.23 & {\bf 20.05} \\ \cline{2-9}
 & 65\% & 16.82 & 16.72 & 16.65 & 15.98 & 15.94 & 15.76 & {\bf 15.07} \\ \cline{2-9}
 & 70\% & 17.47 & 17.29 & 17.46 & {\bf 16.49} & 17.24 & 17.46 & 17.43 \\ \cline{2-9}
 & 75\% & 11.02 & 11.02 & 11.02 &{\bf 10.63} & 11.27 & 11.02 & 11.02 \\ \cline{2-9}
 & 80\% & 8.28 & 8.28 & 8.09 & {\bf 8.02} & 8.06 & 8.14 & 8.17 \\ \cline{2-9}
 & 85\% & 8.01 & 7.95 & 8.01 & {\bf 7.62} & 8.08 & 8.01 & 8.01 \\ \cline{2-9}
 & 90\% & 5.79 & 5.79 & 5.79 & {\bf 5.55} & 5.92 & 5.79 & 5.78 \\ \cline{2-9}
 & 95\% & 2.88 & 2.88 & 2.88 & {\bf 2.86} & 2.92 & 2.88 & 2.88 \\ \hline
\multirow{10}{*}{Churn} & 50\% & 61.04 & 57.74 & 54.13 & 53.85 & 39.46 & 38.88 & {\bf 34.89} \\ \cline{2-9}
 & 55\% & 60.84 & 56.7 & 52.18 & 56.13 & 47.21 & 45.4 & {\bf 42.94} \\ \cline{2-9}
 & 60\% & 66.42 & 59.53 & 59.76 & 57.13 & 48.36 & 47.35 & {\bf 41.68} \\ \cline{2-9}
 & 65\% & 70.57 & 65.32 & 66.35 & 62.78 & 62.02 & 58.32 & {\bf 53.09} \\ \cline{2-9}
 & 70\% & 122.96 & 117.49 & 116.04 & 112.36 & 94.61 & 88.3 & {\bf 82.23} \\ \cline{2-9}
 & 75\% & 81.49 & 80.11 & 81.49 & 79.56 & 77.21 & 74.51 & {\bf 72.74} \\ \cline{2-9}
 & 80\% & 86.62 & 86.12 & 82.48 & 84.84 & 84.25 & 82.92 & {\bf 81.61} \\ \cline{2-9}
 & 85\% & 99.19 & {\bf 93.62} & 97.05 & 96.59 & 94.6 & 96.04 & 95.61 \\ \cline{2-9}
 & 90\% & 93.81 & 93.76 & 92.39 &{\bf 90.29} & 90.69 & 92.64 & 93.03 \\ \cline{2-9}
 & 95\% & 76.27 & 76.01 & 72.63 & 72.69 & 73.09 & 72.27 & {\bf 70.87} \\ \hline
\end{tabular}
\vspace{0.2cm}
\caption{\label{tab:linear_table_all}  {\bf  Logistic Regression Results (continued)}. }
\end{table}

 \begin{figure}[t]
  \begin{center}
    \includegraphics[width=0.8\textwidth]{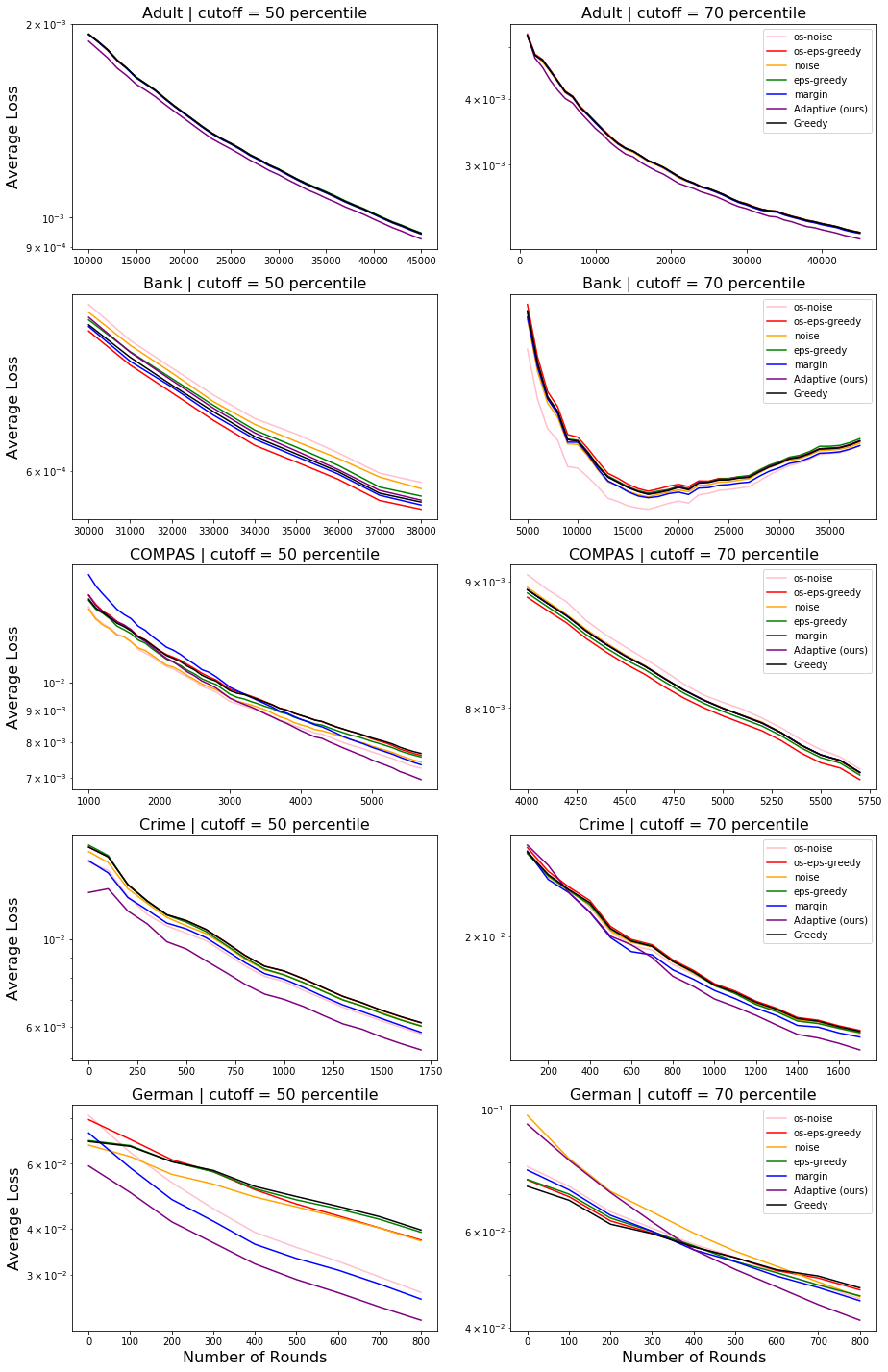}
    \end{center}
  \caption{{\bf Logistic Regression plots}. }
	\label{fig:logistic_plots_all_1}
\end{figure}

 \begin{figure}[t]
  \begin{center}
    \includegraphics[width=0.8\textwidth]{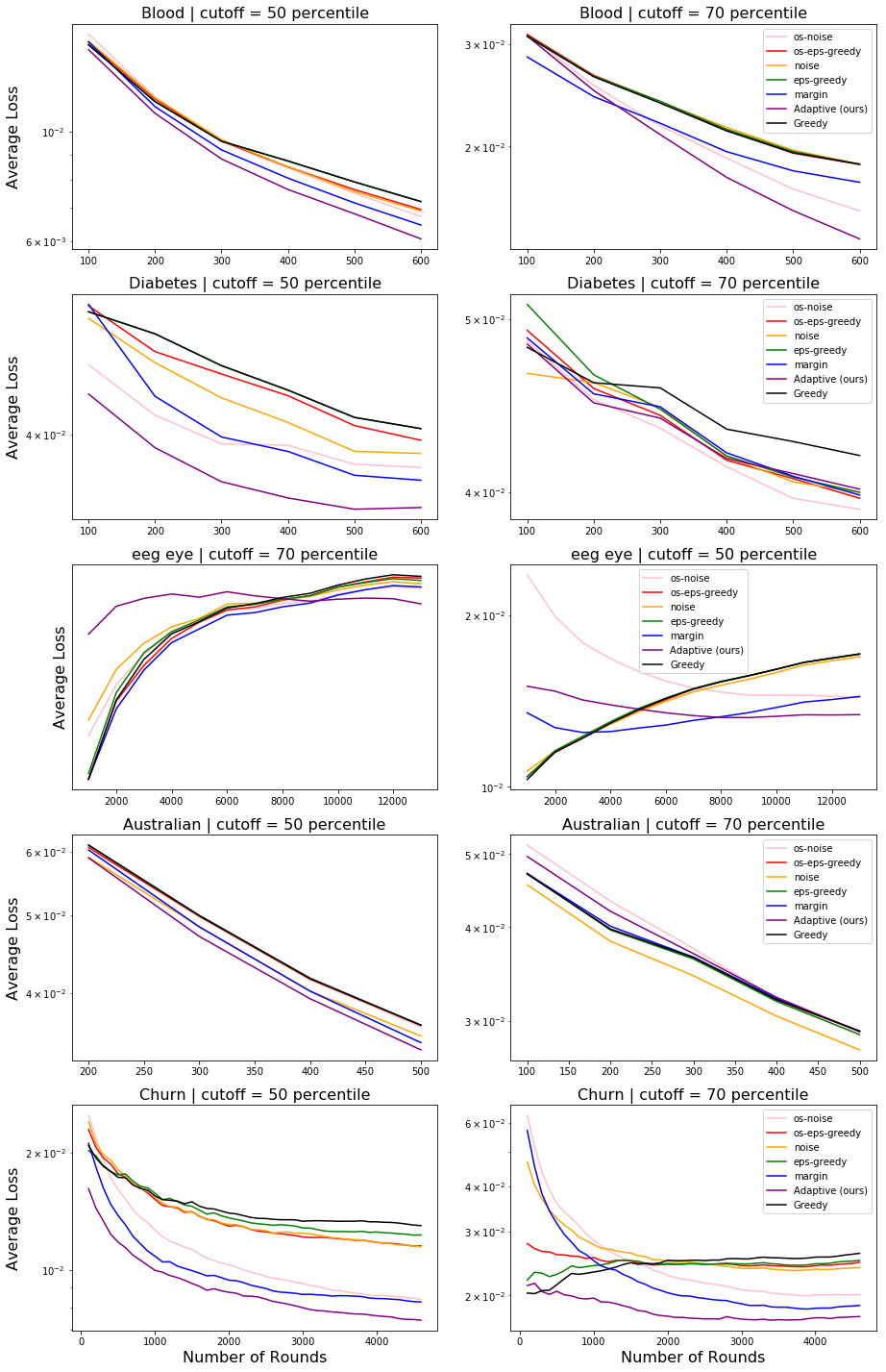}
    \end{center}
  \caption{{\bf Logistic Regression plots (continued)}. }
	\label{fig:logistic_plots_all_2}
\end{figure}

\end{document}